\def\Thmref#1{Theorem~\ref{#1}}
\def\Lemref#1{Lemma~\ref{#1}}
\def\eqref#1{equation~\ref{#1}}
\def\1{\bm{1}}
\newcommand{\reals}{\mathbb{R}}
\newcommand{\lfuncs}{C_{L}(X,\mathbb{R})}
\newcommand{\leftpar}{\left|\left|}
\newcommand{\rightpar}{\right|\right|}
\def\rx{{\textnormal{x}}}
\def\rvb{{\mathbf{b}}}
\def\rvf{{\mathbf{f}}}
\def\rvg{{\mathbf{g}}}
\def\rvh{{\mathbf{h}}}
\def\rvu{{\mathbf{i}}}
\def\rvu{{\mathbf{u}}}
\def\rvv{{\mathbf{v}}}
\def\rvx{{\mathbf{x}}}
\def\rvy{{\mathbf{y}}}
\def\rvz{{\mathbf{z}}}
\def\rmW{{\mathbf{W}}}
\def\vb{{\bm{b}}}
\def\vh{{\bm{h}}}
\def\vn{{\bm{n}}}
\def\vu{{\bm{u}}}
\def\vx{{\bm{x}}}
\def\vy{{\bm{y}}}
\def\vz{{\bm{z}}}
\DeclareMathAlphabet{\mathsfit}{\encodingdefault}{\sfdefault}{m}{sl}
\SetMathAlphabet{\mathsfit}{bold}{\encodingdefault}{\sfdefault}{bx}{n}
\def\gX{{\mathcal{X}}}
\def\gY{{\mathcal{Y}}}
\newcommand{\R}{\mathbb{R}}
\newif\ifcomments
  \newcommand{\colornote}[3]{{\color{#1}\bf{#2: #3}\normalfont}}
  \newcommand{\colornote}[3]{}
\NewDocumentCommand{\evalat}{sO{\big}mm}{%
  \IfBooleanTF{#1}
  {\mleft. #3 \mright|_{#4}}
  {#3#2|_{#4}}%
}
\newtheorem{definition}{Definition}
\newtheorem*{remark}{Remark}
\icmltitlerunning{Sorting Out Lipschitz Function Approximation}
\begin{document}

\twocolumn[
\icmltitle{Sorting Out Lipschitz Function Approximation}



\icmlsetsymbol{equal}{*}

\begin{icmlauthorlist}
\icmlauthor{Cem Anil}{equal,to,ve}
\icmlauthor{James Lucas}{equal,to,ve}
\icmlauthor{Roger Grosse}{to,ve}
\end{icmlauthorlist}

\icmlaffiliation{to}{Department of Computer Science, University of Toronto, Toronto, Canada}
\icmlaffiliation{ve}{Vector Institute, Toronto, Canada}

\icmlcorrespondingauthor{Cem Anil}{cem.anil@mail.utoronto.ca}
\icmlcorrespondingauthor{James Lucas}{jlucas@cs.toronto.edu}

\icmlkeywords{Machine Learning, ICML}

\vskip 0.3in
]



\printAffiliationsAndNotice{\icmlEqualContribution} 

\begin{abstract}
Training neural networks under a strict Lipschitz constraint is useful for provable adversarial robustness, generalization bounds, interpretable gradients, and Wasserstein distance estimation. By the composition property of Lipschitz functions, it suffices to ensure that each individual affine transformation or nonlinear activation  is 1-Lipschitz. The challenge is to do this while maintaining the expressive power. We identify a necessary property for such an architecture: each of the layers must preserve the gradient norm during backpropagation. Based on this, we propose to combine a gradient norm preserving activation function, GroupSort, with norm-constrained weight matrices. We show that norm-constrained GroupSort architectures are universal Lipschitz function approximators. Empirically, we show that norm-constrained GroupSort networks achieve tighter estimates of Wasserstein distance than their ReLU counterparts and can achieve provable adversarial robustness guarantees with little cost to accuracy.
\end{abstract}

\setcounter{section}{1}
Constraining the Lipschitz constant of a neural network puts a bound on how much its output can change in proportion to a change in its input. 
For classification tasks, a small Lipschitz constant leads to better generalization \citep{sokolic2017robust}, improved adversarial robustness \citep{cisse2017parseval, tsuzuku2018lipschitz}, and greater interpretability \citep{tsipras2018there}. Additionally, the Wasserstein distance between two probability distributions can be expressed as the solution to a maximization problem over Lipschitz functions \citep{peyre2018computational}. Despite the wide-ranging applications, the question of how to approximate
Lipschitz functions with neural networks without sacrificing expressive power has remained largely unanswered.

Existing approaches to enforce Lipschitz constraints fall into two categories: regularization and architectural constraints. 
Regularization approaches \citep{drucker1992improving, gulrajani2017improved} perform well in practice, but do not provably enforce the Lipschitz constraint globally.
Approaches based on architectural constraints place limitations on the operator norm (such as the matrix spectral norm) of each layer's weights \citep{cisse2017parseval, yoshida2017spectral}. These provably satisfy the Lipschitz constraint, but come at a cost in expressive power. E.g., norm-constrained ReLU networks are provably unable to approximate simple functions such as absolute value \citep{huster2018limitations}. 

We first identify a simple property that expressive norm-constrained Lipschitz architectures must satisfy: gradient norm preservation. Specifically, in order to represent a function with slope 1 almost everywhere, each layer must preserve the norm of the gradient during backpropagation. ReLU architectures satisfy this only when the activations are positive; empirically, this manifests during training of norm-constrained ReLU networks in that the activations are forced to be positive most of the time, reducing the network's capacity to represent nonlinear functions. We make use of an alternative activation function called \emph{GroupSort} --- a variant of which was proposed by \citet{chernodub2016norm} --- which sorts groups of activations. GroupSort is both Lipschitz and gradient norm preserving. Using a variant of the Stone-Weierstrass theorem, we show that norm-constrained GroupSort networks are universal Lipschitz function approximators. While we focus our attention, both theoretically and empirically, on fully connected networks, the same general principles hold for convolutional networks where the techniques we introduce could be applied. 

Empirically, we show that ReLU networks are unable to approximate even the simplest Lipschitz functions which GroupSort networks can. We observe that norm-constrained ReLU networks must trade non-linear processing for gradient norm, leading to less expressive networks. Moreover, we obtain tighter lower bounds on the Wasserstein distance between complex, high dimensional distributions using GroupSort architectures. We also train classifiers with provable adversarial robustness guarantees and find that using GroupSort provides improved accuracy and robustness compared to ReLU. Across all of our experiments, we found that norm-constrained GroupSort architectures consistently outperformed their ReLU counterparts.

\section{Background}
\label{bckg}

\paragraph{Notation} We will use $\rvx \in \R^{in}$ to denote the input vector to the neural network, $\vy \in \R^{out}$  the output (or logits), $n_{l}$  the dimensionality of the $l^{th}$ hidden layer, $\rmW_{l} \in \R^{n_{l-1} \times n_{l}}$  and $\vb_{l} \in \R^{n_{l}}$  the weight matrix and the bias of the $l^{th}$ layer. We will denote the pre-activations in layer $l$ with $\vz_{l}$ and activations with $\vh_{l}$. The number of layers will be $L$ with $\vy=\vz_{L}$. We will use $\phi$ to denote the activation used. The computation performed by layer $l$ will be:
\[
\vz_{l} = \rmW_{l}\vh_{l-1} + \vb_{l} \quad \quad \vh_{l} = \phi(\vz_{l})
\]
\subsection{Lipschitz Functions}
Given two metric spaces $\gX$ and $\gY$, a function $f: \gX \rightarrow \gY$ is Lipschitz continuous if there exists $K \in \reals$ such that for all $x_1$ and $x_2$ in $\gX$,
\[ d_{\gY}(f(x_1), f(x_2)) \leq K d_{\gX}(x_1, x_2)\]
where $d_{\gX}$ and $d_{\gY}$ are metrics (such as Euclidean distance) on $\gX$ and $\gY$ respectively. In this work, when we refer to \emph{the} Lipschitz constant we are referring to the smallest such $K$ for which the above holds under a given $d_\gX$ and $d_\gY$. Unless otherwise specified, we take $\gX = \reals^n$ and $\gY=\reals^m$ throughout. If the Lipschitz constant of a function is $K$, it is called a \textit{$K$-Lipschitz} function. If the function is everywhere differentiable then its Lipschitz constant is bounded by the operator norm of its Jacobian. Throughout this work, we make use of the following definition:
\begin{definition}
Given a metric space $(X, d_{X})$ where $d_{X}$ denotes the metric on $X$, we write $\lfuncs$ to denote the space of all 1-Lipschitz functions mapping $X$ to $\reals$ (with respect to the $L_p$ metric).
\end{definition}
\subsection{Lipschitz-Constrained Neural Networks}

As 1-Lipschitz functions are closed under composition, to build a 1-Lipschitz neural network it suffices to compose 1-Lipschitz affine transformations and activations.

\paragraph{1-Lipschitz Linear Transformations: } Ensuring that each linear map is 1-Lipschitz is equivalent to ensuring that $||\rmW\rvx||_p \leq ||\rvx||_p$ for any $\rvx$; this is equivalent to constraining the matrix $p$-norm, $||\rmW||_p = \sup_{||\rvx||_p=1} ||\rmW\rvx||_p$, to be at most 1. Important examples of matrix $p$-norms include the matrix 2-norm, which is the largest singular value, and the matrix $\infty$-norm, which can be expressed as:
\[ ||\rmW||_\infty = \max_{1\leq i \leq m} \sum_{j=1}^{m} |w_{ij}|. \]
Similarly, we may also define the mixed matrix norm, given by $||\rmW||_{p,q} = \sup_{||\rvx||_p=1} ||\rmW\rvx||_{q}$. Enforcing matrix norm constraints naively may be computationally expensive. We discuss techniques to efficiently ensure that $||W||_p = 1$ when $p=2$ or $p=\infty$ in Section \ref{sec:constrained_linear_methods}. 

\paragraph{1-Lipschitz Activations: } Most common activations (such as ReLU \citep{krizhevsky2012imagenet}, tanh, maxout \citep{pmlr-v28-goodfellow13}) are 1-Lipschitz, if scaled appropriately. 

\subsection{Applications of Lipschitz Networks}

\textbf{Wasserstein Distance Estimation}
Wasserstein-1 distance (also called Earth Mover Distance) is a way to compute the distance between two probability distributions and has found many applications in machine learning \citep{peyre2018computational}. Using Kantorovich duality \citep{villani2008optimal}, one can recast the Wasserstein distance estimation problem as a maximization problem, defined over 1-Lipschitz functions:
\begin{equation}
\label{eq:dual_obj}
    W(P_{1}, P_{2}) = \sup_{f \in \lfuncs} \big(  \mathop{\mathbb{E}}_{x \sim P_{1}}[f(x)] -  \mathop{\mathbb{E}}_{x \sim P_{2}}[f(x)]  \big)
\end{equation}
\citet{arjovsky2017wasserstein} proposed the Wasserstein GAN architecture, which uses a Lipschitz network as its discriminator. 

\textbf{Adversarial Robustness}
Adversarial examples are inputs to a machine learning system which have been designed to force undesirable behaviour \citep{Szegedy2013IntriguingPO, Goodfellow2014ExplainingAH}. Given a classifier $f$ and an input $\rvx$, we can write an adversarial example as $\rvx_{adv} = \rvx + \delta$ such that $f(\rvx_{adv}) \neq f(\rvx)$ and $\delta$ is small. A small Lipschitz constant guarantees a lower bound on the size of $\delta$ \citep{tsuzuku2018lipschitz}, thus providing robustness guarantees.

\textbf{Some Other applications} Enforcing the Lipschitz constant on networks has found uses in regularization \citep{gouk2018regularisation} and stabilizing GAN training \citep{kodali2017convergence}. 

\section{Gradient Norm Preservation}
\label{sec:gnp}

When backpropagating through a norm-constrained 1-Lipschitz network, the gradient norm is non-increasing as it is processed by each layer. This leads to interesting consequences when we try to represent scalar-valued functions whose input-output gradient has norm 1 almost everywhere.
(This relates to Wasserstein distance estimation, as an optimal dual solution has this property \citep{gulrajani2017improved}.) To approximate such functions, the gradient norm must be preserved by each layer in the network during backpropagation. Unfortunately, norm-constrained networks with common activations are unable to achieve this.

\begin{restatable}{theorem}{reprproof}
\label{repr_proof}
Consider a neural net, $f: \mathbb{R}^n \rightarrow \mathbb{R}$, built with matrix 2-norm constrained weights ($||\mathbf{W}||_2 \leq 1$) and 1-Lipschitz, element-wise, monotonic activation functions. If $||\nabla f(\rvx)||_2 = 1$  almost everywhere, then $f$ is linear.
\end{restatable}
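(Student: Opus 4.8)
The key quantitative observation is that gradient norm is preserved layer by layer. Writing $f = \mathbf{z}_L$ and applying the chain rule, the input-output gradient is
\[
\nabla f(\rvx) = \rmW_1^\top D_1 \rmW_2^\top D_2 \cdots \rmW_{L-1}^\top D_{L-1} \rmW_L^\top,
\]
where $D_l = \mathrm{diag}(\phi'(\vz_l))$ is the Jacobian of the activation at layer $l$. Since each $||\rmW_l||_2 \le 1$ and each $|\phi'(\cdot)| \le 1$ (1-Lipschitz activation), every factor in this product is an operator with norm at most $1$, so $||\nabla f(\rvx)||_2 \le 1$ always, with equality forced on a full-measure set by hypothesis. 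I would first argue that on that full-measure set, \emph{every} intermediate vector in the backpropagation must have norm exactly $1$: if any single factor strictly shrank the running gradient vector, the final norm would be strictly below $1$. So for almost every $\rvx$, and for every layer $l$, the vector $\vg_l := D_l \rmW_{l+1}^\top \cdots \rmW_L^\top$ satisfies $||\vg_l||_2 = 1$, $||\rmW_l^\top \vg_l||_2 = ||\vg_l||_2$, and $||D_l \vg_l'||_2 = ||\vg_l'||_2$ where $\vg_l'$ is the incoming vector from above.

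**Extracting structural consequences.**

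The diagonal condition $||D_l \vg||_2 = ||\vg||_2$ with $D_l$ diagonal and entries in $[-1,1]$ forces $\phi'(z_{l,i}) \in \{-1, +1\}$ on every coordinate $i$ where the incoming gradient component is nonzero. Monotonicity of $\phi$ rules out $-1$, so $\phi'(z_{l,i}) = 1$ on those coordinates. This is the crucial rigidity: almost everywhere, the activation behaves as the identity on the "active" coordinates. The plan is then to promote this a.e. statement to a statement that holds on an open set, hence on a neighborhood of a generic point, and then conclude that $f$ coincides with an affine map on that neighborhood; finally, connectedness of $\mathbb{R}^n$ plus the fact that $f$ is continuous (Lipschitz) and its gradient is locally constant will force $f$ to be globally affine, and the norm-$1$ gradient condition then makes it properly linear-plus-constant (i.e. linear up to the additive constant, which the theorem statement is implicitly absorbing).

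**The main obstacle.**

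The delicate part is the measure-theoretic bookkeeping: $f$ need not be differentiable everywhere (ReLU-type kinks), and the set where some $\phi'(z_{l,i})$ is undefined or where activations sit exactly at a breakpoint could in principle be large. I would handle this by partitioning $\mathbb{R}^n$ according to the "activation pattern" — which coordinates of each $\vz_l$ lie in which linearity region of $\phi$ — noting there are finitely (or countably) many such patterns, each corresponding to $f$ being affine on the corresponding (polyhedral-ish) region. At least one region $R$ has positive measure; on its interior $f$ is affine with gradient $\va$, and the argument above forces $||\va||_2 = 1$ and moreover forces, within that region, the activation to be identity on every active coordinate. The real work is showing the pieces glue: that you cannot have two adjacent full-measure-supported regions with different affine pieces, because the gradient-norm-preservation at the shared boundary would be violated — roughly, a genuine kink strictly contracts the gradient in some direction. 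I expect this gluing / "no kinks are possible" step to be where the monotonicity and the exact norm preservation are both essential, and it is the step I would write most carefully; the rest is chain rule plus linear algebra.
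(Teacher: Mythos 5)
Your first two paragraphs reproduce the core of the paper's argument: the chain rule factorization, the observation that sub-multiplicativity forces every intermediate backpropagated vector to have norm exactly $1$ almost everywhere, and the resulting rigidity $(1-\phi'(z_{l,i})^2)\,g_i^2=0$, so $\phi'=\pm 1$ (hence $+1$ for an increasing activation) on every coordinate carrying nonzero gradient. Up to that point you and the paper agree. The gap is in your "main obstacle" paragraph, and it is a real gap, for two reasons. First, your plan to partition $\reals^n$ by "activation pattern" presupposes that $\phi$ is piecewise linear with finitely many linearity regions. The theorem only assumes $\phi$ is $1$-Lipschitz, element-wise and monotonic --- tanh and sigmoid are admissible --- and for such activations there are no linearity regions and no polyhedral decomposition into affine pieces. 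Second, your proposed gluing lemma, that ``a genuine kink strictly contracts the gradient in some direction,'' is false as a statement about Lipschitz functions: the absolute value function is $1$-Lipschitz with $|\nabla f|=1$ almost everywhere and has a kink. Indeed, the whole content of the theorem is that a norm-constrained network with these activations cannot realize such a kink, so you cannot feed that fact in as a lemma; proving it for networks requires exactly the per-layer analysis you would be trying to avoid, and your sketch does not supply it.

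The paper closes the argument much more directly and without any region decomposition. Having shown that $\phi'\big(z_{L-1,i}(\rvx)\big)=1$ for almost every $\rvx$ on each coordinate $i$ with $W_{L,i}\neq 0$, one notes that the map $\rvx\mapsto \phi\big(z_{L-1,i}(\rvx)\big)-z_{L-1,i}(\rvx)$ is Lipschitz with gradient zero almost everywhere, hence constant; so the last layer is \emph{exactly} an affine function of $\vz_{L-1}$ everywhere, not merely almost everywhere. The top two layers then collapse into a single norm-constrained linear layer ($\rmW_L\rmW_{L-1}$, with the bias absorbed), and the identical argument applies to the new top layer; descending through the network yields a single affine map. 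This "collapse from the top, one layer at a time" step is the piece your proposal is missing, and it is what lets the measure-theoretic issues you worry about (non-differentiability on null sets, activations sitting at breakpoints) be absorbed into a single appeal to the fact that a Lipschitz function with almost-everywhere vanishing gradient is constant. I would recommend replacing your third paragraph with this layer-collapsing argument.
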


A full proof is presented in Appendix~\ref{app:repr_thm}. As a special case, Theorem~\ref{repr_proof} shows that 2-norm-constrained networks with ReLU (or sigmoid, tanh, etc.) activations cannot represent the absolute value function. For ReLU layers, gradient norm can only be preserved if every activation is positive\footnote{Except for units which don't affect the network's output. }. Hence, the network's input-output mapping must be linear.

This tension between preserving gradient norm and nonlinear processing is also observed empirically. As the Lipschitz constant is decreased, the network is forced to sacrifice nonlinear processing capacity to maintain adequate gradient norm, as discussed later in Section \ref{sec:norm_preserve_practical} and Figure~\ref{hist_and_stat}. 

Another key observation is that we may adjust all weight matrices to have singular values of 1 without losing capacity~\footnote{This condition implies gradient norm preservation. }. 
\begin{restatable}{theorem}{equivweights}
\label{equiv_weights}
Consider a network, $f: \mathbb{R}^n \rightarrow \mathbb{R}$, built with matrix 2-norm constrained weights and with $||\nabla f(\rvx)||_2 = 1$ almost everywhere. Without changing the computed function, each weight matrix $\mathbf{W} \in R^{m \times k}$ can be replaced with a matrix $\tilde{\mathbf{W}}$ whose singular values all equal 1.
\end{restatable}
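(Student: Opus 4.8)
The plan is to use the gradient-norm hypothesis to pin down which singular directions of each $\rmW_l$ actually carry signal, and then to inflate all the remaining singular values to $1$ along a path of norm-constrained weight matrices on which the computed function is constant. Write $\vg_l(\rvx)=\nabla_{\vz_l}f$ for the gradient backpropagated to the pre-activations of layer $l$. As in the proof of \Thmref{repr_proof}, the chain rule writes $\nabla_\rvx f$ as an alternating product of the transposed weight matrices $\rmW_j^\top$ and the transposed activation Jacobians, every one of which is non-expansive in the $\ell_2$ operator norm --- the $\rmW_j$ by the norm constraint, the Jacobians because the activations are $1$-Lipschitz. Since this product has norm exactly $1$ for a.e.\ $\rvx$, each factor must preserve the norm of the vector it is handed; in particular $\|\rmW_l^\top\vg_l(\rvx)\|_2=\|\vg_l(\rvx)\|_2$. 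Writing the (full) SVD $\rmW_l=\mU_l\mSigma_l\mV_l^\top$, this equality forces $\vg_l(\rvx)$ into the subspace $V_l^{(1)}$ spanned by the left singular vectors whose singular value is exactly $1$. Hence $\vg_l(\rvx)\in V_l^{(1)}$ for a.e.\ $\rvx$.

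Next I would set up a homotopy. Let $\tilde\rmW_l=\mU_l\mI\mV_l^\top$ (extending $\mU_l,\mV_l$ to orthogonal matrices if $\rmW_l$ is rank-deficient), so that $\tilde\rmW_l$ has all singular values $1$, and for $t\in[0,1]$ put $\rmW_l^{(t)}=\mU_l\big((1-t)\mSigma_l+t\mI\big)\mV_l^\top$, whose singular values $(1-t)\sigma_i+t$ all lie in $[0,1]$; thus every $\rmW_l^{(t)}$ is $2$-norm constrained and $\rmW_l^{(1)}=\tilde\rmW_l$. Let $f^{(t)}$ be the network with $\rmW_l$ replaced by $\rmW_l^{(t)}$ and everything else untouched, and let $\Delta(\rvx)=(\tilde\rmW_l-\rmW_l)\vh_{l-1}(\rvx)$; by construction $\Delta(\rvx)=\sum_{\sigma_i<1}(1-\sigma_i)\langle\vh_{l-1}(\rvx),v_i\rangle u_i$ lies in $\mathrm{span}\{u_i:\sigma_i<1\}=(V_l^{(1)})^\perp$. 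Differentiating in $t$ gives $\partial_t f^{(t)}(\rvx)=\langle\vg_l^{(t)}(\rvx),\Delta(\rvx)\rangle$ (a.e.), where $\vg_l^{(t)}(\rvx)=\nabla_{\vz_l}f^{(t)}$ evaluated along $\rvx$. The crucial point is that for every $t<1$ the singular-value-$1$ subspace of $\rmW_l^{(t)}$ is still exactly $V_l^{(1)}$; so whenever $f^{(t)}$ satisfies $\|\nabla f^{(t)}\|_2=1$ a.e., the first paragraph's argument applies verbatim to $f^{(t)}$ and yields $\vg_l^{(t)}(\rvx)\in V_l^{(1)}$, whence $\partial_t f^{(t)}(\rvx)=0$ for a.e.\ $\rvx$.

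To finish, let $A=\{t\in[0,1]:f^{(t)}=f\text{ as functions}\}$. It contains $0$ and is closed, since $t\mapsto f^{(t)}(\rvx)$ is continuous for every $\rvx$. For $t_0\in A$ we have $\|\nabla f^{(t_0)}\|_2=1$ a.e., hence $\partial_t f^{(t)}(\rvx)\big|_{t=t_0}=0$ for a.e.\ $\rvx$ by the previous paragraph; exploiting that (for piecewise-linear activations) the whole network is piecewise linear jointly in $t$ and $\rvx$, so that this $t$-derivative is locally constant in $t$ for a.e.\ $\rvx$, one upgrades this to $f^{(t)}=f$ on a neighborhood of $t_0$, making $A$ relatively open, hence $A=[0,1]$ by connectedness. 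In particular $f^{(1)}=f$, so $\rmW_l$ may be replaced by $\tilde\rmW_l$ without changing $f$; doing this layer by layer (each step preserves $f$, hence the hypothesis, and modifies only its own matrix) gives the result. I expect the genuine difficulty to sit in that last upgrade: the orthogonality $\vg_l^{(t)}\in V_l^{(1)}$ is only available on the set of pre-activations actually realized by some input, which can be a lower-dimensional subset of $\R^{n_l}$, so propagating it to nearby $t$ (and ruling out a non-uniform shrinking of the neighborhood sizes as $\|\rvx\|\to\infty$) requires the piecewise-linear geometry of the sub-network from layer $l$ onward; for smooth $1$-Lipschitz activations one substitutes the standard fact that Lipschitz functions are absolutely continuous along lines, applied along the segments $\vz_l(\rvx)+t\Delta(\rvx)$.
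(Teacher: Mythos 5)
Your core argument is the same as the paper's: take the SVD $\rmW_l=\mU\mSigma\mV^\top$, interpolate linearly between $\mSigma$ and the identity, and observe that the $t$-derivative of the output is the inner product of the backpropagated gradient $\partial f/\partial\vz_l$ with a vector supported on the singular directions having $\sigma_i<1$, which vanishes because norm preservation during backpropagation forces that gradient into the span of the $\sigma_i=1$ directions. Where you genuinely depart from the paper is your third paragraph: the paper simply asserts $\partial f/\partial t=0$ for all $t$ and concludes, whereas you correctly point out that for $t>0$ the gradient $\partial f^{(t)}/\partial\vz_l$ is evaluated at the deformed pre-activations $\vz_l^{(t)}(\rvx)$, where the orthogonality to $\bigl(V_l^{(1)}\bigr)^{\perp}$ has not been established (it is only known on the pre-activations realized by a network that satisfies the gradient-norm hypothesis), and you patch this with an open--closed connectedness argument in $t$. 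This is a real subtlety that the published proof elides, and your treatment is the more careful one; the price is that the openness of the set $A$ now needs its own justification --- the a.e.-in-$\rvx$ vanishing of $\partial_t f^{(t)}$ at a single $t_0$ does not immediately give $f^{(t)}=f$ on a neighborhood, since the exceptional null set of inputs depends on $t_0$ --- which you candidly flag but leave resting on the piecewise-linear structure of the activations (or absolute continuity along the segments $\vz_l+t\Delta$). In short: same decomposition and same key orthogonality observation as the paper, but your version makes explicit, without fully discharging, a continuity step that the paper's proof implicitly assumes.
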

The proof of Theorem~\ref{equiv_weights} is given in Appendix~\ref{app:repr_thm}. The condition of singular values equaling 1 is equivalent to the following: when $m > k$, the columns of $\tilde{\mathbf{W}}$ are orthonormal; when $m < k$, the rows of $\tilde{\mathbf{W}}$ are orthonormal; and when $m = k$, $\tilde{\mathbf{W}}$ is orthogonal. We thereon refer to such matrices as orthonormal. With these in mind, we restrict our search for expressive Lipschitz networks to those that contain orthonormal weight matrices and activations which preserve the gradient norm during backpropagation.

\section{Methods}
\label{mthd}

If we can learn any 1-Lipschitz function with a neural network then we can trivially extend this to K-Lipschitz functions by scaling the output by $K$. We thus focus on designing 1-Lipschitz network architectures with respect to the $L_{2}$ and $L_{\infty}$ metrics by requiring \emph{each} layer to be 1-Lipschitz.

\subsection{Gradient Norm Preserving Activation Functions}

\begin{figure}
\centering
\includegraphics[width=0.5\linewidth]{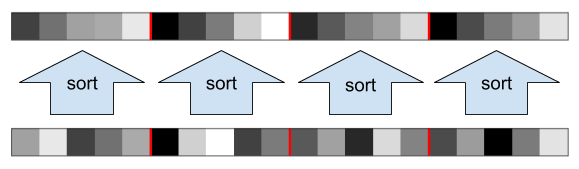}
\caption{GroupSort activation with a grouping size of 5.}
\label{f:group}
\vspace{-0.5cm}
\end{figure}

As discussed in Section~\ref{sec:gnp}, common activation functions such as ReLU are \textit{not} gradient norm preserving. To achieve norm preservation, we use a general purpose 1-Lipschitz activation we call $\bf{GroupSort}$. This activation separates the pre-activations into groups, sorts each group into ascending order, and outputs the combined "group sorted" vector (shown in Figure \ref{f:group}). Visualizations of how GroupSort transforms pre-activations are shown in Appendix \ref{app:vis_groupsort}. 

\textit{Properties of GroupSort:} GroupSort is a 1-Lipschitz operation. It is also norm preserving: its Jacobian is a permutation matrix, and permutation matrices preserve every vector $p$-norm. It is also homogeneous ($\mathbf{GroupSort}(\alpha \rvx) = \alpha \mathbf{GroupSort}(\rvx)$) as sorting order is invariant to scaling. 

\textit{Varying the Grouping Size:} When we pick a grouping size of 2, we call the operation \textbf{MaxMin}. This is equivalent to the Orthogonal Permutation Linear Unit \citep{chernodub2016norm}. When sorting the \textit{entire} input, we call the operation \textbf{FullSort}. MaxMin and FullSort are equally expressive: they can be reduced to each other without violating the norm constraint on the weights. FullSort can implement MaxMin by  "chunking" the biases in pairs. We can write:
\begin{equation*}
\mathbf{MaxMin}(\rvx) = \mathbf{FullSort}(\mathbf{I}\rvx + \vb) - \vb,
\end{equation*}
where the biases $\vb$ push each pair of activations to a different magnitude scale so that they get sorted independently (Appendix \ref{app:diff_group_size}). FullSort can also be represented using a series of MaxMin layers that implement BubbleSort; this obeys any matrix $p$-norm constraint since it can be implemented using only permutation matrices for weights. Although FullSort is able to represent certain functions more compactly, it is often more difficult to train compared to MaxMin. 

\textit{Performing Folding via Absolute Value:} Under the matrix 2-norm constraint, MaxMin is equivalent to absolute value in  expressive power, as shown in Appendix \ref{expressivity}. 

Applying absolute value to the activations has the effect of folding the space on each of the coordinate axes. Hence, a rigid linear transformation, followed by absolute value, followed by another rigid linear transformation, can implement folding along an arbitrary hyperplane. This gives an interesting interpretation of how MaxMin networks can represent certain functions by implementing absolute value, as shown in Figure~\ref{fig:folding}. \citet{montufar2014number} provide an analysis of the expressivity of networks that can perform folding. 

\begin{figure}
    \centering
    \includegraphics[width=0.8\linewidth]{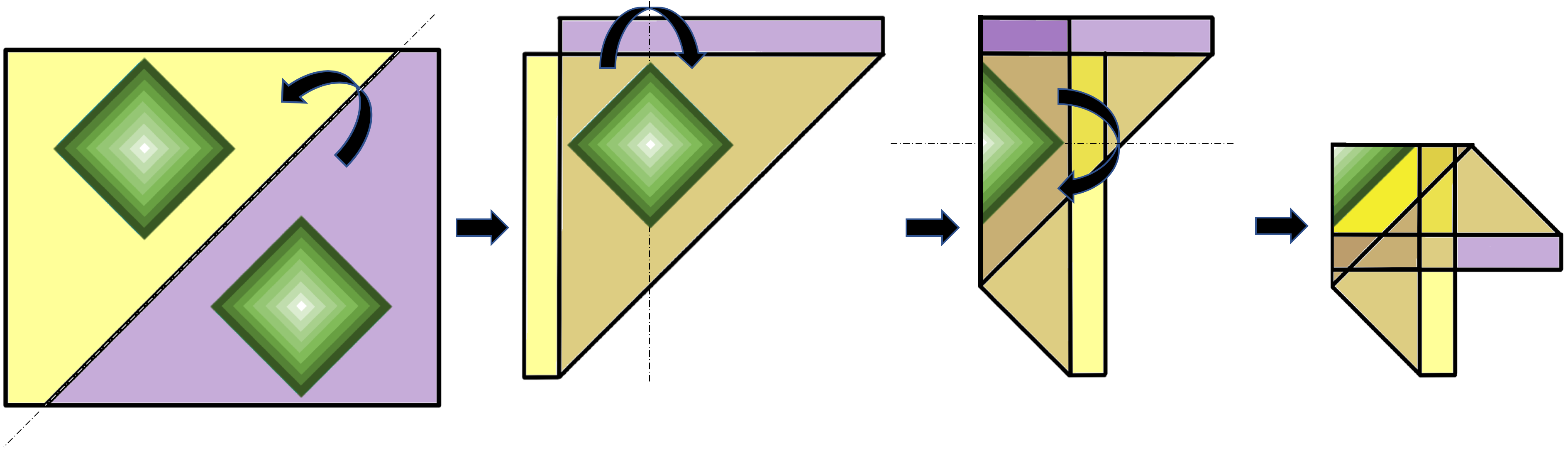}
    \caption{A rigid transformation, followed by absolute value, followed by another rigid transformation, can implement folding along an arbitrary hyperplane. Here,  the network represents a function consisting of a pair of square pyramids by three folding operations, until the function is representable as a linear function of the top layer activations.}
    \label{fig:folding}
    \vspace{-0.5cm}
\end{figure}

\textit{GroupSort and other activations:} Without norm constraints, GroupSort can recover many other common activation functions. For example, ReLU, Leaky ReLU, concatenated ReLU \citep{shang2016understanding}, and Maxout \citep{pmlr-v28-goodfellow13}. Details can be found in Appendix~\ref{app:groupsort_and_other}. 

For a discussion regarding computational considerations, refer to Appendix \ref{app:groupsort_computational}.

\vspace{-0.1cm}
\subsection{Norm-constrained linear maps}
\vspace{-0.1cm}
\label{sec:constrained_linear_methods}
We discuss how to practically enforce the 1-Lipschitz constraint on the linear layers for both 2- and $\infty$-norms.

\subsubsection{Enforcing $||W||_{2}=1$ while Preserving Gradient Norm} 
Several methods have been proposed to enforce matrix 2-norm constraints during training \citep{cisse2017parseval, yoshida2017spectral}. In the interest of preserving the gradient norm, we go a step further and enforce \emph{orthonormality} of the weight matrices. This stronger condition ensures that all singular values are exactly 1, rather than bounded by 1. 

We make use of an algorithm first introduced by \citet{bjorck1971iterative}, which we refer to as Bj\"orck Orthonormalization (or simply Bj\"orck). Given a matrix, this algorithm finds the closest orthonormal matrix through an iterative application of the Taylor expansion of the polar decomposition. Given an input matrix $A_0 = A$, the algorithm computes,
\begin{equation}\label{eqn:bjorck_update}
    A_{k+1} = A_{k}\left (I + \dfrac{1}{2}Q_k + \ldots + (-1)^p \binom{-\frac{1}{2}}{p} Q_k^p \right )
\end{equation}
where $Q_k = I - A_{k}^T A_{k}$. This algorithm is fully differentiable and thus has a pullback operator for the Stiefel manifold \citep{absil2009optimization} allowing us to optimize over orthonormal matrices directly. A larger choice of $p$ adds more computation but gives a closer approximation for each iteration. With $p=1$ around 15 iterations is typically sufficient to give a close approximation but this is computationally prohibitive for wide layers. In practice, we found that we could use 2-3 iterations per forward pass and increase this to 15 or more iterations at the end of training to ensure a tightly enforced Lipschitz constraint. There exists a simple and easy-to-implement sufficient condition to ensure convergence, as described in Appendix \ref{sec:bjorck__conv}. Bj\"orck orthonormalization was also used by \citet{van2018sylvester} to enforce orthogonal weights for variational inference with normalizing flows  \citep{rezende2015variational}. 

It is possible to project the weight matrices on the L2 ball both \textit{after} each gradient descent step, or \textit{during} forward pass (if the projection is differentiable). For the latter, the learn-able parameters of the network are unconstrained during training. In our experiments, we exploit the differentiability of Bj\"orck operations and adopt the latter approach. Note that this doesn't lead to extra computational burden at test time, as the network parameters can be orthonormalized after training, and a new network can be built using these.

Other approaches have been proposed to enforce matrix 2-norm constraints. Parseval networks \citep{cisse2017parseval} and spectral normalization \citep{miyato2018spectral} are two such approaches, each of which can be used together with the GroupSort activation. Parseval networks also aim to set all of the singular values of the weight matrices to 1, and can be interpreted as a special case of Bj\"orck orthonormalization. In Appendix~\ref{sec:bjorck_vs_parseval} we provide a comparison of the Bj\"orck and Parseval algorithms. Spectral normalization is an inexpensive and practical way to enforce the 1-Lipschitz constraint, but since it only constrains the largest singular value to be less than 1, it is not gradient norm preserving by construction. We demonstrate the practical limitations caused by this with an experiment described in Appendix~\ref{sec:bjorck_vs_spectral}. 

While we restrict our focus to fully connected layers, our analyses apply to convolutions. Convolutions can be \emph{unfolded} to be represented as linear transformations and bounding the spectral norm of the filters bound the spectral norm of the unfolded operation \citep{gouk2018regularisation, cisse2017parseval, sedghi2018singular}. For a discussion regarding computational considerations, refer to Appendix \ref{app:computational_bjorck}.

\subsubsection{Enforcing $||W||_{\infty}=1$} 
Due to its simplicity and suitability for a GPU implementation, we use Algorithm 1 from \citet{condat2016fast} (see Appendix \ref{app:proj_linf}) to project the weight matrices onto the $L_{\infty}$ ball. 

\subsection{Provable Adversarial Robustness}\label{sec:prov_adv}

A small Lipschitz constant limits the change in network output under small adversarial perturbations. As explored by \citet{tsuzuku2018lipschitz}, we can guarantee adversarial robustness at a point by considering the \textit{margin} about that point divided by the Lipschitz constant. Formally, given a network with Lipschitz constant $K$ (with respect to the $L_\infty$ metric) and an input $\rvx$ with corresponding class $t$ that produces logits $\rvy$, we define its margin by 
\begin{equation}\label{eqn:margin}
    \mathcal{M}(\rvx) = \max(0, y_{t} - \max_{i \neq t} y_i)
\end{equation}
If $\mathcal{M}(\rvx) > K \epsilon / 2$, the network is robust to all perturbations $\delta$ with $||\delta||_\infty < \epsilon$, at $\rvx$. We train our networks with $\infty$-norm constrained weights using a multi-class hinge loss:
\begin{equation}\label{eqn:multi_margin}
L(\rvy, t) = \sum_{i \neq t} \max(0, \kappa - (y_t - y_i))
\end{equation}
where $\kappa$ controls the margin enforcement and depends on the Lipschitz constant and desired perturbation tolerance. 

\subsection{Dynamical Isometry and Preventing Vanishing Gradients}
\label{sec:dyn_iso}
Gradient norm preserving networks can also represent functions whose input-output Jacobian has singular values that all concentrate near unity \citep{pennington2017resurrecting}, a property known as \textit{dynamical isometry}. This property has been shown to speed up training by orders of magnitude when enforced during weight initialization \citep{pennington2017resurrecting}, and explored in the contexts of training RNNs \citep{chen2018dynamical} and deep CNNs \citep{xiao2018dynamical}. Enforcing norm preservation on each layer also solves the vanishing gradients problem, as the norm of the back-propagated gradients are maintained at unity. Using our methods, one can achieve dynamical isometry throughout training (see  Figure \ref{fig:all_sn_hist}), reaping the aforementioned benefits. Note that ReLU networks cannot achieve dynamical isometry \citep{pennington2017resurrecting}. We leave exploring these benefits to a future study.

\section{Related Work}
\label{rltd}

Several methods have been proposed to train Lipschitz neural networks \citep{cisse2017parseval, yoshida2017spectral, miyato2018spectral, gouk2018regularisation}. \citet{cisse2017parseval} regularize the weights of the neural network to obey an orthonormality constraint. The corresponding update to the weights can be seen as one step of the Bj\"orck orthonormalization scheme (Eq.~\ref{eqn:bjorck_update}). This regularization can be thought of as projecting the weights closer to the manifold of orthonormal matrices after each update. This is a critical difference to our own work, in which a differentiable projection is used during each update. Another approach, spectral normalization \citep{miyato2018spectral}, employs power iteration to rescale each weight by its spectral norm. Although efficient, spectral normalization doesn't guarantee gradient norm preservation and can therefore under-use Lipschitz capacity, as discussed in Appendix \ref{sec:bjorck_vs_spectral}. \citet{arjovsky2016unitary, wisdom2016full, sun2017learning} use explicitly parametrized square orthogonal weight matrices. 

Other techniques penalize the Jacobian of the network, constraining the Lipschitz constant locally \citep{gulrajani2017improved, drucker1992improving, sokolic2017robust}. While it is often easy to train networks under such penalties, these methods don't provably enforce a Lipschitz constraint. 

The Lipschitz constant of neural networks has been connected theoretically and empirically to generalization performance \citep{bartlett1998sample, bartlett2017spectrally, neyshabur2017exploring, neyshabur2018a, sokolic2017robust}. \citet{neyshabur2018a} show that if the network Lipschitz constant is small then a non-vacuous bound on the generalization error can be derived. Small Lipschitz constants have also been linked to adversarial robustness \citep{tsuzuku2018lipschitz, cisse2017parseval}. In fact, adversarial training can be viewed as approximate gradient regularization \citep{miyato2017virtual, simon2018adversarial} which makes the function Lipschitz locally around the training data. Lipschitz constants can also be used to provide provable adversarial robustness guarantees. \citet{tsuzuku2018lipschitz} manually enforce a margin depending on an approximation of the upper bound on the Lipschitz constant which in turn guarantees adversarial robustness. In this work we also explore provable adversarial robustness through margin training but do so with a network whose Lipschitz constant is known and globally enforced.

Classic neural network universality results use constructions which violate the norm-constraints needed for Lipschitz guarantees \citep{cybenko1989approximation, hornik1991approximation}. \citet{huster2018limitations} explored universal approximation properties of Lipschitz networks and proved that ReLU networks cannot approximate absolute value with $\infty$-norm constraints. In this work we also show that many activations, including ReLU, are not sufficient with $2$-norm constraints. We prove that Lipschitz functions \emph{can} be universally approximated if the correct activation function is used.

\section{Universal Approximation of Lipschitz Functions}

Universal approximation results for continuous functions don't apply to Lipschitz networks as the constructions typically involve large Lipschitz constants. Moreover, \citet{huster2018limitations} showed that it is impossible to approximate even the absolute value function with $\infty$-norm-constrained ReLU networks. We now present theoretical guarantees on the approximation of Lipschitz functions. To our knowledge, this is the first universal Lipschitz function approximation result for norm-constrained networks.

We will first prove a variant of the Stone-Weierstrass Theorem which gives a simple criterion for universality (similar to Lemma 4.1 in \citet{yaacov2010lipschitz}). We then construct a class of networks with GroupSort which satisfy this criterion. 
\begin{definition}
We say that a set of functions, $L$, is a \emph{lattice} if for any $f,g \in L$ we have $max(f,g) \in L$ and $min(f,g) \in L$ (where $max$ and $min$ are defined pointwise).
\end{definition}
\begin{restatable}{lemma}{swtheorem}\label{lemma:stoneweierstrass}
(Restricted Stone-Weierstrass Theorem) Suppose that $(X, d_{X})$ is a compact metric space with at least two points and $L$ is a lattice in $\lfuncs$ with the property that for any two distinct elements $x,y \in X$ and any two real numbers $a$ and $b$ such that $|a-b| \leq d_{X}(x,y)$ there exists a function $f \in L$ such that $f(x) = a$ and $f(y) = b$. Then $L$ is dense in $\lfuncs$.
\end{restatable}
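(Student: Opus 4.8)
The plan is to run the classical lattice form of the Stone--Weierstrass argument, with the single extra care that every function we build must remain inside the $1$-Lipschitz class; the interpolation hypothesis on $L$ is exactly what makes this possible. Fix an arbitrary target $g \in \lfuncs$ and a tolerance $\eps > 0$. The goal is to produce $f \in L$ with $\|f - g\|_\infty < \eps$, which (since $\eps$ and $g$ are arbitrary) is precisely density of $L$ in $\lfuncs$.

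First I would record the pointwise interpolation data. For any two distinct points $x, y \in X$, the fact that $g$ is $1$-Lipschitz gives $|g(x) - g(y)| \le d_X(x,y)$, so the hypothesis on $L$ produces a function $f_{x,y} \in L$ with $f_{x,y}(x) = g(x)$ and $f_{x,y}(y) = g(y)$ (for $y = x$ one can use any auxiliary point, or simply note below that $x$ is covered automatically; in particular $L \neq \emptyset$). Now run a two-stage compactness argument. Fix $x \in X$. For each $y$, continuity of $f_{x,y} - g$ together with $f_{x,y}(y) - g(y) = 0$ yields an open neighbourhood $U_y \ni y$ on which $f_{x,y} < g + \eps$; note $x \in U_y$ always, since $f_{x,y}(x) - g(x) = 0 < \eps$. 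Compactness of $X$ extracts a finite subcover $U_{y_1}, \dots, U_{y_k}$, and then $f_x := \min(f_{x,y_1}, \dots, f_{x,y_k})$ lies in $L$ (iterating the binary $\min$ from the lattice property), satisfies $f_x(x) = g(x)$, and obeys $f_x < g + \eps$ on all of $X$. For the dual step, continuity of $f_x - g$ with $f_x(x) - g(x) = 0$ gives an open $V_x \ni x$ on which $f_x > g - \eps$; a finite subcover $V_{x_1}, \dots, V_{x_m}$ then yields $f := \max(f_{x_1}, \dots, f_{x_m}) \in L$. This $f$ still satisfies $f < g + \eps$ everywhere (each $f_{x_j}$ does) and now also $f > g - \eps$ everywhere (each point lies in some $V_{x_j}$). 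Hence $\|f - g\|_\infty < \eps$.

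I do not expect a genuine obstacle here: once the structure is set up, the argument is careful bookkeeping. The two places where the specific hypotheses are essential — as opposed to pure point-set topology — are, first, the very start, where $1$-Lipschitzness of $g$ is exactly the inequality $|g(x) - g(y)| \le d_X(x,y)$ needed to invoke the two-point interpolation property of $L$; and second, the closure of the lattice under finite $\min$ and $\max$, which is what guarantees that the intermediate functions $f_x$ and the final $f$ are elements of $L$ and not merely of $\lfuncs$. The only subtlety worth flagging is to confirm that finite $\min$/$\max$ of $1$-Lipschitz functions are again $1$-Lipschitz (so the lattice operations are well-defined on $\lfuncs$), which is immediate.
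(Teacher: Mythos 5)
Your argument is correct and is essentially identical to the paper's own proof: the same two-stage compactness argument, first taking finite minima of interpolating functions $f_{x,y}$ to get $f_x \le g + \eps$ with $f_x(x) = g(x)$, then finite maxima of the $f_x$ to also secure $f > g - \eps$, with the lattice hypothesis keeping everything inside $L$ and the $1$-Lipschitzness of $g$ licensing the two-point interpolation. No gaps; the only difference from the paper is cosmetic (your $U$/$V$ labels are swapped relative to theirs).
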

\begin{remark}
We could replace $|\cdot|$ with any metric on $\reals$.
\end{remark}

The full proof of \Lemref{lemma:stoneweierstrass} is presented in Appendix \ref{app:univ}. Note that Lemma \ref{lemma:stoneweierstrass} says that $\mathcal{A}$ is a universal approximator for 1-Lipschitz functions iff $\mathcal{A}$ is a lattice that separates points. Using \Lemref{lemma:stoneweierstrass}, we can derive the second of our key results. Norm-constrained networks with GroupSort activations are able to approximate any Lipschitz function in $L_p$ distance.

\begin{figure}
    \centering
    \includegraphics[width=0.9\linewidth]{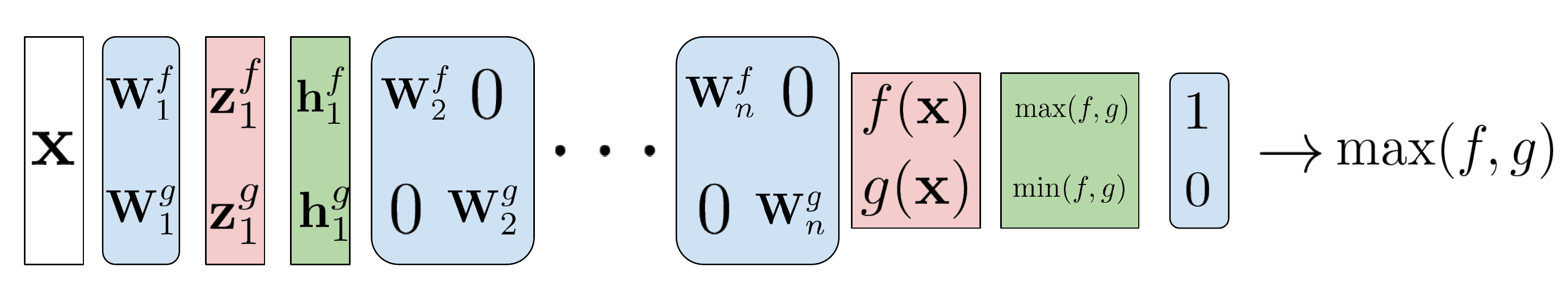}
    \caption{Lattice construction for universal approximation.}
    \label{fig:univ_construction}
    \vspace{-0.5cm}
\end{figure}

\begin{restatable}{theorem}{lpuniv}\label{thm:universalapprox} (Universal Approximation with Lipschitz Networks)
Let $\mathcal{LN}_{p}$ denote the class of fully-connected networks whose first weight matrix satisfies $||\rmW_{1}||_{p,\infty} = 1$, all other weight matrices satisfy $||\rmW||_{\infty}=1$, and GroupSort activations have a group size of 2. Let $X$ be a closed and bounded subset of $\reals^n$ endowed with the $L_p$ metric. Then the closure of $\mathcal{LN}_{p}$ is dense in $\lfuncs$.
\end{restatable}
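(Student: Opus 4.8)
The plan is to apply the Restricted Stone--Weierstrass Theorem (\Lemref{lemma:stoneweierstrass}) with $L = \overline{\mathcal{LN}_{p}}$, the sup-norm closure of $\mathcal{LN}_{p}$, on the compact metric space $(X, L_p)$ (a closed, bounded subset of $\reals^n$ is compact; if $X$ is a single point the claim is trivial since both sides are just the constant functions). This requires three facts: (i) $\overline{\mathcal{LN}_{p}} \subseteq \lfuncs$, i.e.\ every such network is $1$-Lipschitz from $(X,L_p)$ to $\reals$; (ii) $\overline{\mathcal{LN}_{p}}$ is a lattice; and (iii) for any distinct $x,y \in X$ and reals $a,b$ with $|a-b| \le d_X(x,y) = \|x-y\|_p$ there is an element of $\overline{\mathcal{LN}_{p}}$ equal to $a$ at $x$ and $b$ at $y$. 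Given these, the lemma yields density of $\overline{\mathcal{LN}_{p}}$ in $\lfuncs$.

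Fact (i) is immediate from the norm constraints. The first layer satisfies $\|\rmW_{1}u - \rmW_{1}u'\|_\infty \le \|u-u'\|_p$ since $\|\rmW_1\|_{p,\infty}=1$; every later linear layer is $1$-Lipschitz from $(\cdot,L_\infty)$ to $(\cdot,L_\infty)$ since $\|\rmW\|_\infty=1$; and $\mathbf{GroupSort}$ with group size $2$ is a coordinate permutation, hence $1$-Lipschitz in $L_\infty$. Composing, and using that the scalar output makes $L_\infty = |\cdot|$, gives the $1$-Lipschitz bound, which then passes to uniform limits. For fact (iii), I would use $L_p$--$L_q$ duality, $q$ the H\"older conjugate of $p$: since $\min\{\|v\|_q : \langle v,\, x-y\rangle = a-b\} = |a-b|/\|x-y\|_p \le 1$, choose $v$ with $\|v\|_q \le 1$ and $\langle v,\, x-y\rangle = a-b$, and take the affine map $f(u) = a + \langle v,\, u-x\rangle$. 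Then $f(x)=a$, $f(y)=b$, and $|f(u)-f(u')| \le \|v\|_q\|u-u'\|_p \le \|u-u'\|_p$. This $f$ is computed by a network with $\rmW_1 = v^{\top}$, whose $(p,\infty)$-norm is exactly $\|v\|_q \le 1$, and bias $a - \langle v, x\rangle$; to meet ``$\|\rmW_{1}\|_{p,\infty}=1$'' on the nose one appends a dummy row of $q$-norm $1$ and discards it with a $1\times 2$ selector matrix (of $\infty$-norm $1$) in a second layer, a change that does not affect the computed function --- and in any case such normalization slack is absorbed by passing to the closure.

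The substantive step is (ii), and it is where the choice of activation enters. Given $f,g \in \mathcal{LN}_{p}$, I would construct a network computing $\max(f,g)$ (with $\min(f,g)$ produced simultaneously as a free byproduct) by running the two networks in parallel on the shared input: their two first layers stack vertically into a single matrix whose $(p,\infty)$-norm equals the larger of the two, hence still $1$, while all subsequent layers combine block-diagonally, which preserves $\|\cdot\|_\infty = 1$ (the $\infty$-norm is the maximum absolute row sum). The shallower branch is padded to equal depth using norm-preserving identity layers, realizable as $\mathbf{GroupSort}(\rmI\vz + \vb)$ followed by a compensating bias in the next layer (the bias-chunking device used for the $\mathbf{MaxMin}$/$\mathbf{FullSort}$ equivalence). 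This produces the vector $(f(x),g(x))$; one more $\mathbf{GroupSort}$ (group size $2$) layer returns $(\max(f,g)(x),\, \min(f,g)(x))$, and a final $1\times 2$ matrix of $\infty$-norm $1$ selects the desired coordinate. Thus $\mathcal{LN}_{p}$ is closed (up to the usual normalization slack) under pointwise $\max$ and $\min$; since these operations are continuous in the sup norm, $\overline{\mathcal{LN}_{p}}$ is a lattice.

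I expect the main obstacle to be bookkeeping rather than conceptual: carefully arranging the depth-matching identity layers and the exact unit-norm constraints on the concatenated and padded matrices without changing the computed function, and checking at each junction that the stated $\|\cdot\|_{p,\infty}$ and $\|\cdot\|_\infty$ conditions are met. Once (i)--(iii) are in place, \Lemref{lemma:stoneweierstrass} applies verbatim and gives that $\overline{\mathcal{LN}_{p}}$ is dense in $\lfuncs$.
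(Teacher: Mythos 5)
Your proposal is correct and follows essentially the same route as the paper's own proof: reduce to the restricted Stone--Weierstrass lemma, verify point separation with a single affine layer, and realize $\max(f,g)$ and $\min(f,g)$ by running $f$ and $g$ in parallel channels (vertically stacked first layer, block-diagonal subsequent layers, identity padding for unequal depths, a final MaxMin plus a selector row). Your treatment is in fact slightly more careful than the paper's in two spots --- the explicit H\"older-duality argument for point separation, and the attention to achieving the norm constraints exactly rather than merely bounding them --- but the underlying construction is identical.
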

\begin{proof} (Sketch)
Observe first that $\mathcal{LN}_{p} \subset \lfuncs$. By \Lemref{lemma:stoneweierstrass}, it is sufficient to show that $\mathcal{LN}_{p}$ is closed under max and min and has the point separation property. For the latter, given $x,y \in X$ and $a,b \in \reals$ with $|a-b| \leq ||x-y||_p$, we can fit a line with a single layer network, $f$, satisfying the 1-Lipschitz constraint with $f(x)=a$ and $f(x)=b$.

Now consider $f$ and $g$ in $\mathcal{LN}_{p}$. For simplicity, assume that they have the same number of layers. We can construct the layers of another network $h \in \mathcal{LN}_{p}$ by vertically concatenating the weight matrices of the first layer in $f$ and $g$, followed with block diagonal matrices constructed from the remaining layers of $f$ and $g$ (see Figure~\ref{fig:univ_construction}). The final layer of the network computes $[f(x), g(x)]$. We then apply GroupSort to get $[max(f,g)(x), min(f,g)(x)]$ and take the dot product with $[1,0]$ or $[0,1]$ to get the max or min.
\end{proof}
The formal proof of \Thmref{thm:universalapprox} is presented in Appendix \ref{app:univ}. One special case of \Thmref{thm:universalapprox} is for 1-Lipschitz functions in $L_\infty$ norm, in this case we may extend the restricted Stone-Weierstrass theorem in $L_\infty$ norm to vector-valued functions to  prove universality in this setting. Formally:
\begin{restatable}{observation}{linfuniv}
Consider the set of networks, $\mathcal{LN}^{m}_{\infty} = \{ f: \reals^n \rightarrow \reals^m, ||W||_\infty = 1 \}$. Then $\mathcal{LN}^{m}_{\infty}$ is dense in 1-Lipschitz functions with respect to the $L_\infty$ metric.
\end{restatable}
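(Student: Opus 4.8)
The plan is to reduce the vector-valued claim to the scalar case already settled by \Thmref{thm:universalapprox}, exploiting that the $L_\infty$ metric on the output space decouples over coordinates. For $f = (f_1,\dots,f_m): X \to \reals^m$ one has $\|f(x) - f(y)\|_\infty = \max_{i} |f_i(x) - f_i(y)|$, so $f$ is $1$-Lipschitz from $(X,L_\infty)$ to $(\reals^m,L_\infty)$ if and only if every component $f_i$ lies in $\lfuncs$ with respect to the $L_\infty$ metric on $X$. Hence, given $\eps>0$, it suffices to approximate each $f_i$ to within $\eps$ in sup norm by a scalar network and then assemble these into a single network belonging to $\mathcal{LN}^m_\infty$.

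Step one: apply \Thmref{thm:universalapprox} with $p=\infty$ (note that for the first layer $\|\rmW_1\|_{\infty,\infty} = \|\rmW_1\|_\infty$, so $\mathcal{LN}_\infty$ is exactly the $p=\infty$ instance of $\mathcal{LN}_p$) to obtain, for each $i$, a network $g_i \in \mathcal{LN}_\infty$ with $\sup_{x\in X}|f_i(x)-g_i(x)|<\eps$. Step two: build a single network $g$ computing $x \mapsto (g_1(x),\dots,g_m(x))$ by the same ``concatenate then block-diagonalize'' construction used in the proof of \Thmref{thm:universalapprox}: stack the first weight matrices of the $g_i$ on top of one another, arrange the remaining per-network weight matrices block-diagonally, and stack the biases accordingly. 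Since the matrix $\infty$-norm is the maximum absolute row sum, stacking rows and forming block-diagonal matrices both leave it equal to $\max_i 1 = 1$; and a group-size-$2$ GroupSort applied to the concatenated activation vector acts blockwise (pad each layer to even width if necessary), so it merely performs the GroupSort of each $g_i$ in parallel. Thus $g \in \mathcal{LN}^m_\infty$ and $\sup_x \|f(x)-g(x)\|_\infty = \sup_x \max_i |f_i(x)-g_i(x)| < \eps$.

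Two points need care. First, to put all the $g_i$ at a common depth $L^\star = \max_i \mathrm{depth}(g_i)$ one must pad the shallower networks with layers computing the identity; because a GroupSort layer is not literally the identity, this is done exactly as in Appendix~\ref{app:diff_group_size}, inserting a unit-norm linear map together with a bias that separates each pair of coordinates onto well-separated magnitude scales (finite biases suffice since the activations range over a compact set) so that the ensuing sort is a fixed permutation, which the next layer then undoes --- all with $\infty$-norm-$1$ matrices. Second, the parallel construction naturally produces layers with $\|\rmW\|_\infty \le 1$ rather than $=1$; an arbitrarily small perturbation of the entries restores exact equality while changing $g$ arbitrarily little, so the statement is most cleanly phrased (as in \Thmref{thm:universalapprox}) in terms of the closure of $\mathcal{LN}^m_\infty$. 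I expect the depth-equalization bookkeeping to be the only genuine obstacle; everything else is a direct corollary of \Thmref{thm:universalapprox} together with the coordinatewise nature of $\|\cdot\|_\infty$. (An alternative, as the authors hint, is to prove a vector-valued version of \Lemref{lemma:stoneweierstrass} --- closure under coordinatewise $\max$/$\min$ plus a strengthened point-separation hypothesis --- but the direct reduction reuses more of what is already established and avoids re-running the Stone--Weierstrass argument.)
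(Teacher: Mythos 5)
Your proof is correct, but it takes a genuinely different route from the paper's. The paper extends the lattice argument to vector-valued functions: it observes that the element-wise $\max$ and $\min$ of two functions that are 1-Lipschitz into $(\reals^m, L_\infty)$ are again 1-Lipschitz, invokes \Lemref{lemma:stoneweierstrass} componentwise, and then does the real work of exhibiting a single network in $\mathcal{LN}^{m}_{\infty}$ that computes the element-wise $\max(f,g)$ or $\min(f,g)$ of two given vector-valued networks (concatenate and block-diagonalize as in \Thmref{thm:universalapprox}, then permute the $2m$ outputs into pairs $[f_1,g_1,\ldots,f_m,g_m]$, apply MaxMin, and select). You instead bypass the vector-valued lattice machinery entirely: since $\|f(x)-f(y)\|_\infty=\max_i|f_i(x)-f_i(y)|$ decouples over coordinates, you approximate each component $f_i$ separately by a scalar network via \Thmref{thm:universalapprox} with $p=\infty$ and then juxtapose the $m$ scalar networks in parallel using the same stacking/block-diagonal bookkeeping, with no $\max/\min$ gadget at the end. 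What each buys: your argument is a genuine corollary of \Thmref{thm:universalapprox} and avoids re-running (and re-justifying) the Stone--Weierstrass step for vector-valued targets, which the paper treats rather tersely; the paper's argument establishes the stronger structural fact that $\mathcal{LN}^{m}_{\infty}$ is itself a lattice under element-wise $\max/\min$, which is of independent interest and matches the template of the scalar proof. Your two flagged subtleties --- equalizing depths by implementing the identity with MaxMin layers, and the $\|\rmW\|_\infty\le 1$ versus $=1$ issue resolved by passing to the closure --- are exactly the points the paper's \Thmref{thm:universalapprox} proof also has to address, and you handle them correctly.
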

While these constructions rely on constraining the $\infty$-norm of the weights \footnote{Our construction fails for 2-norm constrained weights, as column-wise stacking two matrices that have max singular values of 1 might result in a matrix that has singular values larger than 1. }, constraining the 2-norm often makes the networks easier to train, and we have not yet found a Lipschitz function which 2-norm constrained GroupSort networks couldn't approximate empirically. It remains an open question whether 2-norm constrained GroupSort networks are also universal Lipschitz function approximators.

\section{Experiments}

\label{exprmt}
\begin{figure}[t!]
\centering
\includegraphics[width=0.68\linewidth]{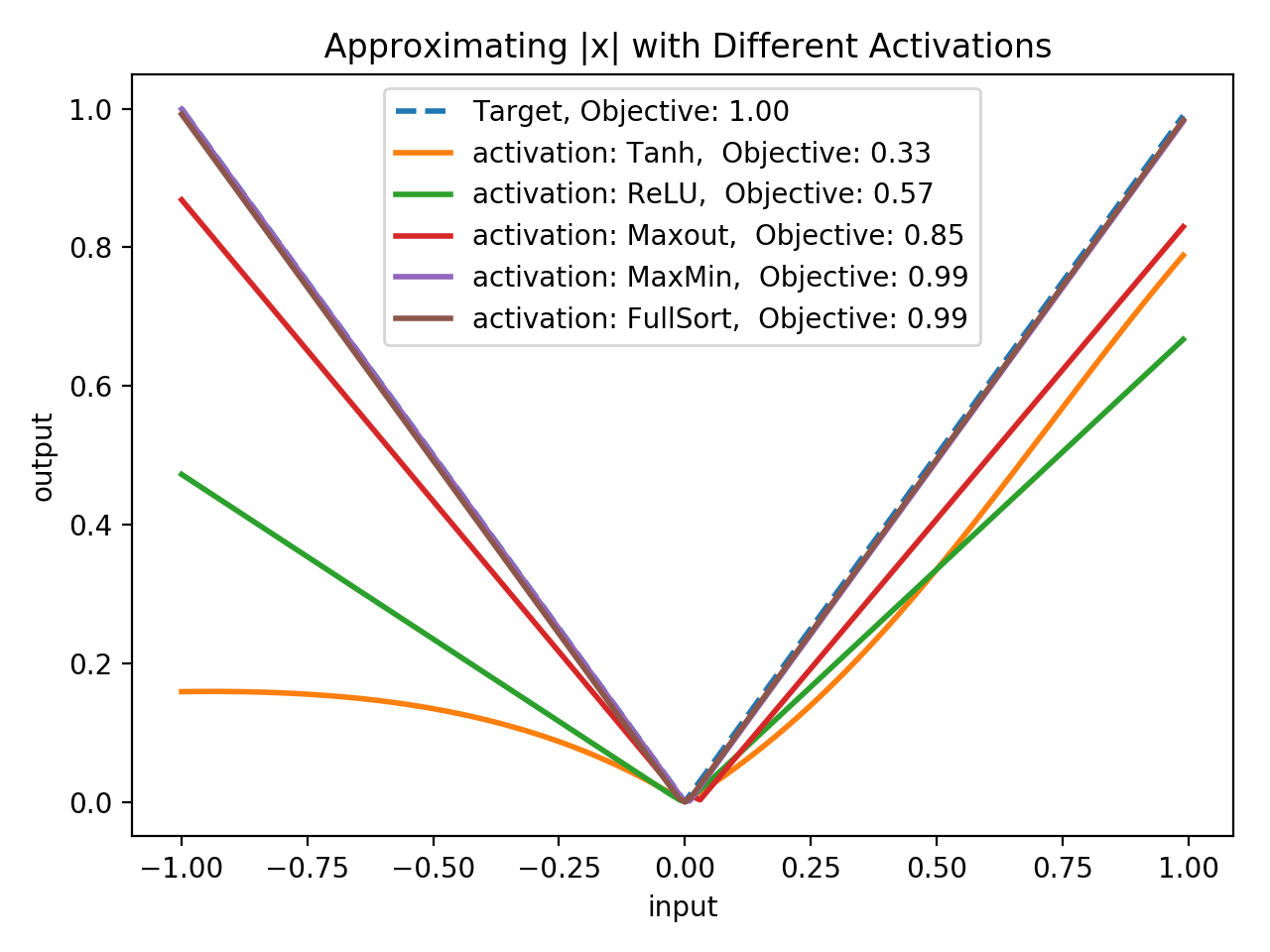}
\caption{Approximating the absolute value function with Lipschitz networks using different activations. The objective values indicate the estimated Wasserstein Distance. }
\label{f:abs}
\vspace{-0.5cm}
\end{figure}
Our experiments had two main goals. First, we wanted to test whether norm-constrained GroupSort architectures can represent Lipschitz functions other approaches cannot.
Second, we wanted to test if our networks can perform competitively with existing approaches on practical tasks that require strict bounds on the global Lipschitz constant. 
We present additional results in Appendix~\ref{sec:additional_exp}, including CIFAR-10 \citep{krizhevsky2009learning} classification and MNIST small data classification. Experiment details are shown in Appendix~\ref{app:exp_details}.

\subsection{Representational Capacity}
We investigate the ability of 2-norm-constrained networks with different activations to represent Lipschitz functions. 

\subsubsection{Quantifying Expressive Power}
\label{sec:optimal_dual}
\begin{figure}[tp]
    \centering
    \includegraphics[width=0.60\linewidth]{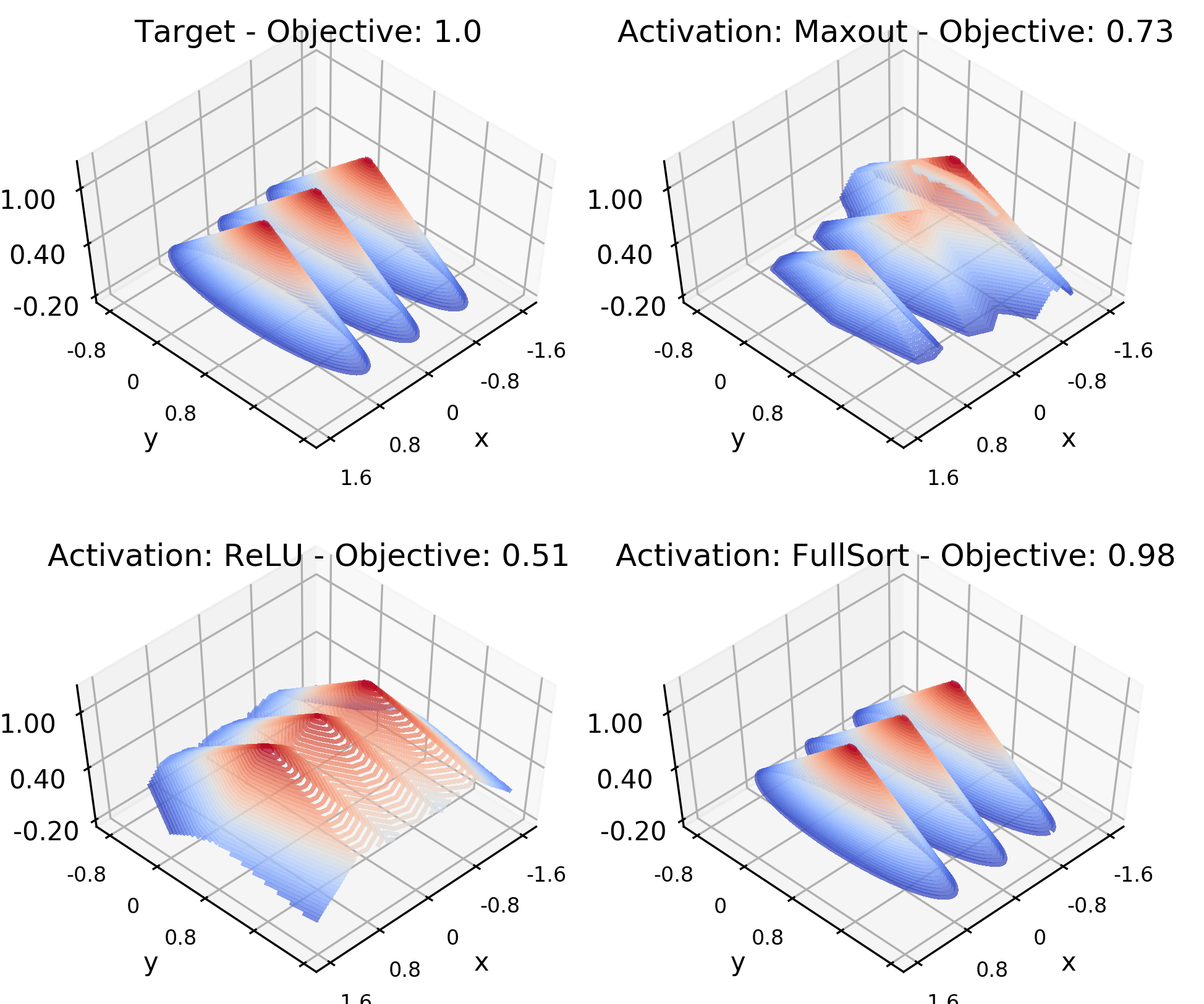}
    \caption{Approximating three circular cones with slope 1 using Lipschitz networks using various activations. The objective values represent the estimated Wasserstein distance.}
    \label{f:3_cone}
    \vspace{-0.5cm}
\end{figure}
We propose an effective method to quantify how expressive different Lipschitz architectures are. We pick pairs of probability distributions whose Wasserstein Distance and optimal dual surfaces can be computed analytically. We then train networks under the Wasserstein distance objective (Equation \ref{eq:dual_obj}) using samples from these distributions to assess how closely they can estimate the Wasserstein distance. For 1D and 2D problems, we visualize the learned dual surfaces to inspect the failure modes of non-expressive architectures. 

We focus on approximating the absolute value function, three circular cones and circular cones in higher dimensions. Appendix \ref{app:surf} describes how pairs of probability distributions can be picked which have these optimal dual surfaces, and a Wasserstein distance of precisely 1. 

Figure \ref{f:abs} shows the functions learned by Lipschitz networks built with various activations, trained to approximate absolute value. Non-GNP (non-gradient norm preserving) activations are incapable of approximating this trivial Lipschitz function. While increasing the network depth helps (Table~\ref{t:prob_dim}), this representational barrier leads to limitations as the problem dimensionality increases.

Figure \ref{f:3_cone} shows the dual surfaces approximated by networks trained to approximate three circular cones. This figure points to an even more serious pathology with non-GNP activations: by attempting to increase the slope, the non-GNP networks may distort the shape of the function, leading to different behavior from the optimal solution. In the case of training WGAN critics, this cannot be fixed by increasing the Lipschitz constant. (Optimal critics for different Lipschitz constants are equivalent up to scaling.)


We evaluated the expressivity of architectures built with different activations for higher dimensional inputs, on the task of approximating high dimensional circular cones. As shown in Table \ref{t:prob_dim}, increasing problem dimensionality leads to significant drops in the Wasserstein objective for networks built with non-GNP activations, and increasing the depth of the networks only slightly improves the situation. We also observed that while the MaxMin activation performs significantly better, it also needs large depth to learn the optimal solution. Surprisingly, shallow FullSort networks can easily approximate high dimensional circular cones.

\begin{table}[bp]
\centering
\begin{tabular}{lllllll}
\multicolumn{1}{c}{Input Dim. }  &\multicolumn{1}{c}{ 128}&\multicolumn{1}{c}{ 128}&\multicolumn{1}{c}{ 256}&\multicolumn{1}{c}{256}&\multicolumn{1}{c}{ 512}&\multicolumn{1}{c}{512}   \\
\multicolumn{1}{c}{Depth}  &\multicolumn{1}{c}{\it 3}&\multicolumn{1}{c}{\it 7 }&\multicolumn{1}{c}{\it 3}&\multicolumn{1}{c}{\it 7 }&\multicolumn{1}{c}{\it 3}&\multicolumn{1}{c}{\it 7 }
\\ \hline \\
ReLU                & 0.51 & 0.60 & 0.50 & 0.53 & 0.46 & 0.49  \\
Maxout              & 0.66 & 0.71 & 0.60 & 0.66 & 0.52 & 0.56  \\
MaxMin              & 0.87 & 0.95 & 0.83 & 0.93 & 0.72 & 0.88  \\
FullSort            & \textbf{1.00} & \textbf{1.00} & \textbf{1.00} & \textbf{1.00} & \textbf{1.00} & \textbf{1.00} 
\end{tabular}
\caption{\textbf{Effect of problem dimensionality:} Testing how well different activations and depths can optimize the dual Wasserstein objective with different input dimensionality. The optimal dual surface obtains a dual objective of 1.}
\label{t:prob_dim}
\end{table}

\subsubsection{Relevance of Gradient Norm Preservation in Practical Settings}
\label{sec:norm_preserve_practical}
Thus far, we have focused on examples where the gradient of the network should be 1 almost everywhere. For many practical tasks we don't need to meet this strong condition. Is gradient norm preservation relevant in other settings?

\paragraph{How much of the Lipschitz capacity can we use?} We have proven that ReLU networks approach linear functions as they utilize the full gradient capacity allowed with Lipschitz constraints. To understand these implications practically, we trained ReLU and GroupSort networks on MNIST with orthonormal weight constraints enforced to ensure that they are 10-Lipschitz functions. We looked at the distribution of the spectral radius (largest singular value) of the network Jacobian over the training data. Figure~\ref{hist_and_stat} displays this distribution for each network. We observed that while both networks satisfy the Lipschitz constraint, the GroupSort network does so much more tightly than the ReLU network. The ReLU network was not able to make use of the capacity afforded to it and the observed Lipschitz constant was actually closer to 8 than 10. In Appendix~\ref{app:dyn_iso} we show the full singular value distribution which suggests that 2-norm-constrained GroupSort networks can achieve near-dynamical isometry throughout training.

We studied the activation statistics of ReLU networks trained on MNIST with and without Lipschitz constraints in Figure~\ref{hist_and_stat}. Given a threshold, $\tau \in [0,1]$, we computed the proportion of activations throughout the network which are positive at least as often as $\tau$ over the training data. Without a Lipschitz constraint, the activation statistics were much sparser, with almost no units active when $\tau > 0.4$. Smaller Lipschitz constants forced the network to use more positive activations to make use of its gradient capacity (see Section~\ref{sec:gnp}). In the worst case, about 10\% of units were ``undead'', or active all of the time, and hence didn't contribute any nonlinear processing. It's not clear what effect this has on representational capacity, but such a dramatic change in the network's activation statistics suggests that it made significant compromises to maintain adequate gradient norm.

\begin{figure}
\begin{minipage}{0.45\linewidth}
    \includegraphics[width=\linewidth]{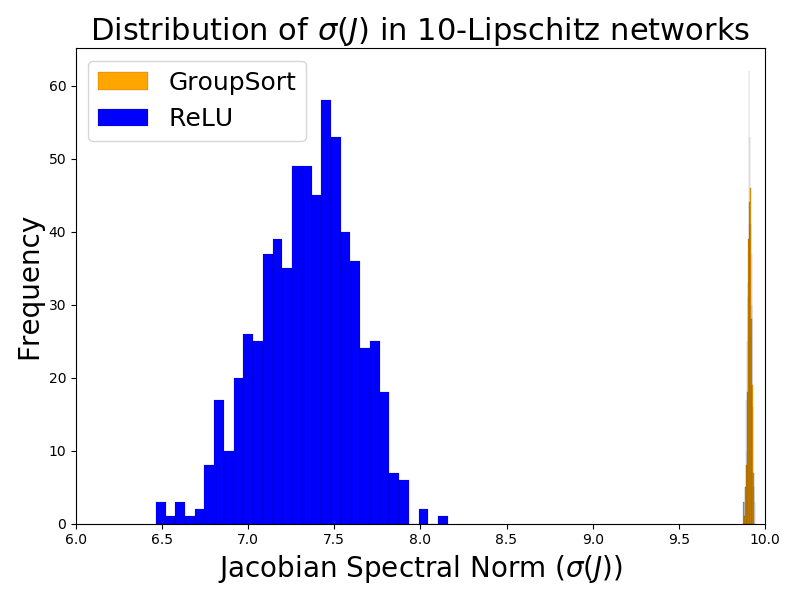}
\end{minipage}\hfill
\begin{minipage}{0.45\linewidth}
\includegraphics[width=\linewidth]{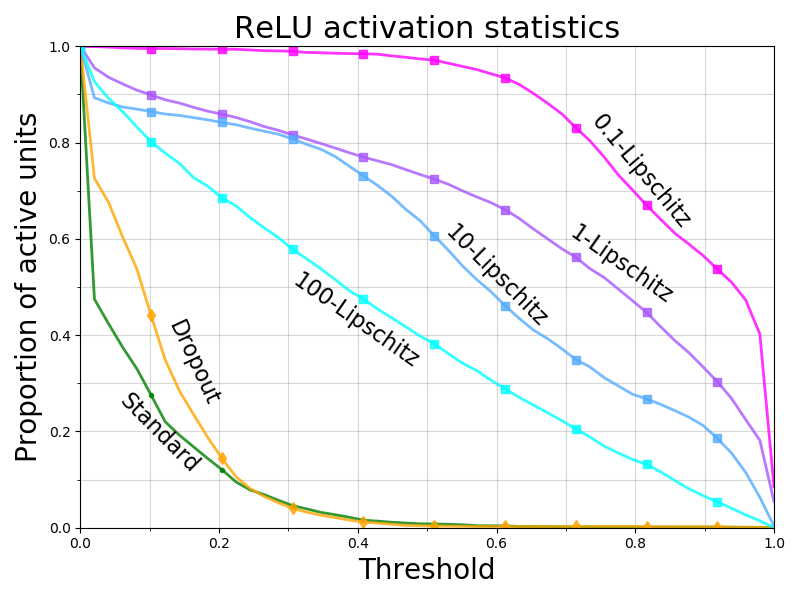}
\end{minipage}\hfill%
\vspace{-0.2cm}
\caption{\textbf{(left) Jacobian spectral norm distribution:} Jacobian spectral norm distribution of 10 Lipschitz ReLU and GroupSort nets. \textbf{(right) Activation statistics Lipschitz ReLU nets:} Ratio of activations that are positive more often than the threshold. }
\vspace{-0.5cm}
\label{hist_and_stat}
\end{figure}

\subsection{Wasserstein Distance Estimation}


\begin{table}[b!]
\begin{center}
\begin{tabular}{ll|ll}
  &Linear & \textbf{MNIST} & \textbf{CIFAR10}\\ \hline
ReLU & Spectral & $0.95 \pm 0.01$ & $1.12 \pm 0.02$\\
Maxout & Spectral & $1.20 \pm 0.03$ & $1.40 \pm 0.01$ \\ 
MaxMin & Spectral & $1.36 \pm 0.07$ & $1.62 \pm 0.04$\\
GroupSort(4) & Spectral & $1.64 \pm 0.02$ & $1.63 \pm 0.03$\\
GroupSort(9) & Spectral & $1.70 \pm 0.02$ & $1.41 \pm 0.04$\\
ReLU & Bj\"orck & $1.40 \pm 0.01$ & $1.39 \pm 0.01$\\
Maxout & Bj\"orck & $1.95 \pm 0.01$ & $1.76 \pm 0.02$ \\ 
MaxMin & Bj\"orck & $2.16 \pm 0.01$ & $2.08 \pm 0.02$\\
GroupSort(4) & Bj\"orck & $\mathbf{2.31 \pm 0.01}$ & $2.17 \pm 0.02$\\
GroupSort(9) & Bj\"orck & $2.31 \pm 0.01$ & $\mathbf{2.23 \pm 0.02}$\\
\end{tabular}
\caption{Estimating the Wasserstein Distance between the data and generator distributions using 1-Lipschitz feedforward networks, for MNIST and CIFAR10 GANs.  }
\label{t:wde}
\end{center}
\end{table}
We have shown that our methods can obtain tighter lower bounds on Wasserstein distance on synthetic tasks in Section \ref{sec:optimal_dual}. We now consider the more challenging task of computing the Wasserstein distance between the generator distribution of a GAN and the empirical distribution of the data it was trained on.\footnote{The Wasserstein distance to the empirical data distribution is likely to be a loose upper bound on the Wasserstein distance to the data generating distribution, but this task still tests the ability to estimate Wasserstein distance in high-dimensional spaces.} As the optimal surfaces under the dual Wasserstein objective have a gradient norm of 1 almost everywhere (Corollary 1 in \citet{gemici2018primal}), the gradient norm preservation properties discussed in Section \ref{sec:gnp} are critical. Experiment details are outlined in Appendix \ref{app:wde}.

We trained a GAN variant on MNIST and CIFAR10 datasets, then froze the weights of the generators. Using samples from the generator and original data distribution, we trained independent 1-Lipschitz networks to compute the Wasserstein distance between the empirical data distribution and the generator distribution. We used a shallow fully connected architecture (3 layers, 720 neurons wide). As seen in Table \ref{t:wde}, using norm-preserving activation functions helps achieve tighter lower bounds on Wasserstein distance.

\textit{Training WGANs:} We were also able to train Wasserstein GANs \citep{arjovsky2017wasserstein} whose discriminators comprised of networks built with our proposed proposed 1-Lipschitz building blocks. Some generated samples can be found in Appendix \ref{app:wgan}. We leave further investigation of the GANs built with our techniques to a future study.
\subsection{Robustness and Interpretability}

\begin{figure}
\centering
  \centering
  \includegraphics[width=0.62\linewidth]{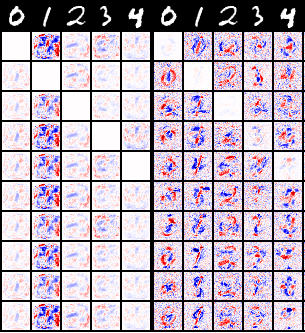}
\caption{Gradients of input images with respect to targeted cross-entropy loss, for standard (left) and Lipschitz (right) nets. Images from classes 0-4 were chosen randomly from the test set (first row). Following rows show the gradient with different targets (0-9). Positive pixel values are red. \vspace{-0.1cm}} 
\label{fig:robust_grads}
\end{figure}

We explored the robustness of Lipschitz networks trained on MNIST to adversarial perturbations measured with $L_\infty$ distance. We enforced an $L_\infty$ constraint on the weights and used the multi-class hinge loss (Equation~\ref{eqn:multi_margin}), as we found this to be more effective than manual margin training \citep{tsuzuku2018lipschitz}. We enforced a Lipschitz constant of $K=1000$ and chose the margin $\kappa = Ka$ where $a$ was $0.1$ or $0.3$. This technique provides margin-based provable robustness guarantees as described in Section~\ref{sec:prov_adv}. We also compared to PGD training \citep{madry2017towards}. We attacked each model using the FGS and PGD methods (using random restarts and 200 iterations for the latter) \citep{Szegedy2013IntriguingPO, madry2017towards} under the CW loss \citep{carlini2016towards}. The results are presented in Figure~\ref{fig:adv_robustness}. The Lipschitz networks with MaxMin activations achieved better clean accuracy and larger margins than their ReLU counterparts, leading to better robustness. PGD training requires large capacity networks \citet{madry2017towards} and we were unable to match the large perturbation performance of margin training with this architecture (using a larger CNN would produce better results). Note that the Lipschitz networks don't see any adversarial examples during training.

\begin{figure}[t!]
    \centering
    \includegraphics[width=8.3cm, height=2.80cm]{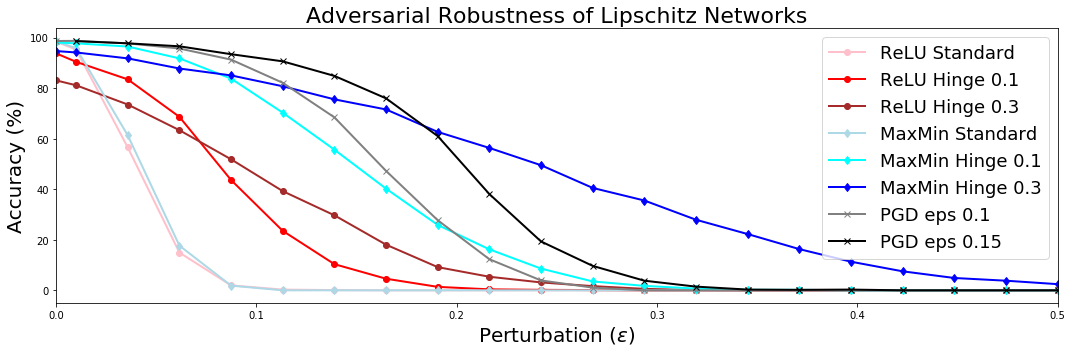}
    \caption{\textbf{Adversarial Robustness} Accuracy on PGD adversarial examples for varying perturbation sizes $\epsilon$.}
    \label{fig:adv_robustness}
    \centering
    \includegraphics[width=8.3cm, height=2.80cm]{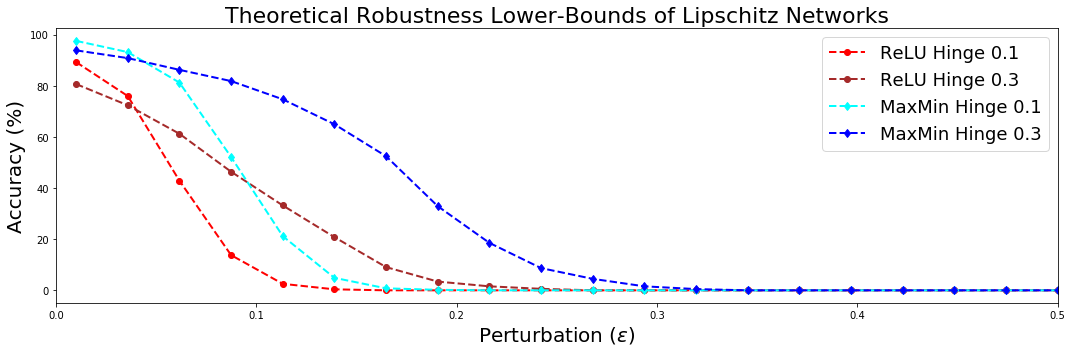}
    \caption{\textbf{Theoretical Adversarial Robustness} Theoretical accuracy lower bound for perturbation sizes $\epsilon$.}
    \label{fig:adv_robustness_theory}
    \vspace{-0.3cm}
\end{figure}

With the strictly enforced Lipschitz constant, we can compute theoretical lower bounds on the accuracy against adversaries with a maximum perturbation strength $\epsilon$. In Figure~\ref{fig:adv_robustness_theory}, we show this lower bound for each of the models previously studied. This is computed by finding the proportion of data points which violate the margin by at least $K\epsilon$. Note that at the computed threshold, the model has low confidence in the adversarial example. An even larger perturbation would be required to induce confident misclassification.

Adversarially trained networks learn robust features and have interpretable gradients \citep{tsipras2018there}. We found that this holds for Lipschitz networks, without using adversarial training. The gradients with respect to the inputs are displayed for a standard network and a Lipschitz network (with 2-norm constraints) in Figure~\ref{fig:robust_grads}.

\section{Conclusion}

We identified gradient norm preservation as a critical component of Lipschitz network design and showed that failure to achieve this leads to less expressive networks. By combining the GroupSort activation and orthonormal weight matrices, we presented a class of networks which are provably 1-Lipschitz and can approximate any 1-Lipschitz function arbitrarily well. Empirically, we showed that our GroupSort networks are more expressive than existing architectures and can be used to achieve better estimates of Wasserstein distance and provable adversarial robustness guarantees. 
\clearpage

\subsubsection*{Acknowledgments}
We extend our warm thanks to our colleagues for many helpful discussions. In particular, we would like to thank Mufan Li for pointing us towards the lattice formulation of the Stone-Weierstrass theorem, and Qiyang Li for his help in correcting a minor issue with the robustness experiments. We also thank Elliot Creager, Ethan Fetaya, J\"orn Jacobsen, Mark Brophy, Maryham Mehri Dehnavi, Philippe Casgrain, Saeed Soori, Xuchan Bao and many others not listed here for draft feedback and many helpful conversations. 

\bibliography{references}
\bibliographystyle{icml2019}

\clearpage
\pagebreak

\begin{appendices}
\section{GroupSort Activation}\label{app:activations}
In this section, we provide visualizations to shed light on how GroupSort networks compute simple 1D functions, explain how GroupSort compares with other activations, analyze the effect of the grouping size on its expressivity and discuss its computational complexity of GroupSort. 

\subsection{Visualizing GroupSort Networks}
\label{app:vis_groupsort}
In Figures \ref{f:abs_vis} and \ref{f:two_kinks}, we visualize the hidden layer activations of GroupSort networks as the input to the network is varied. The networks are approximating the absolute value function and a curve resembling the letter "W", with a slope of 1 almost everywhere. 

\begin{figure}[bh!]
\centering
\includegraphics[width=1\linewidth]{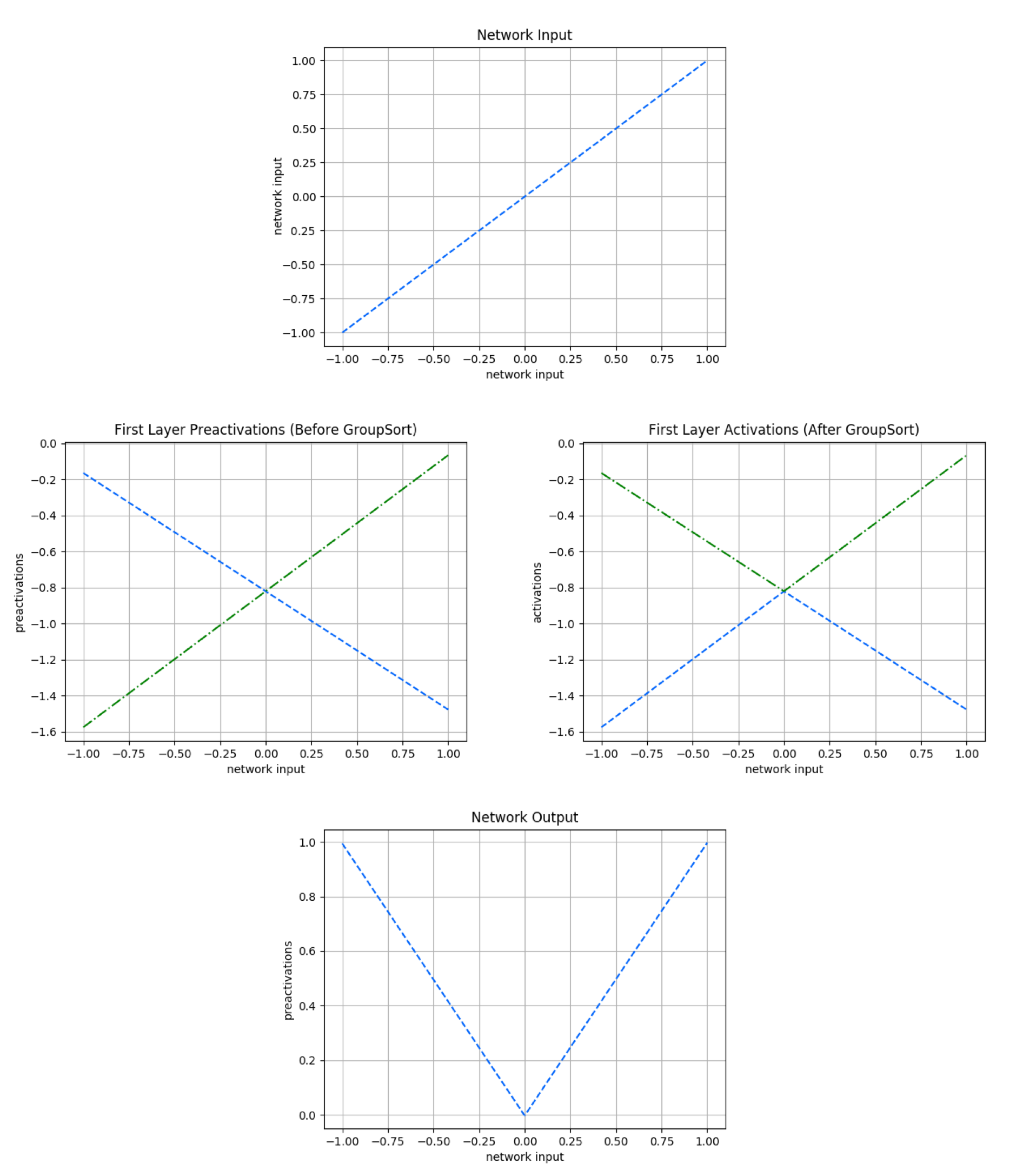}
\caption{Visualization of the pre-activations and activations of a one hidden layer GroupSort network that is approximating the absolute value function. The network has two units in its hidden layer. }
\label{f:abs_vis}
\end{figure}

\begin{figure}[th!]
\centering
\includegraphics[width=1\linewidth]{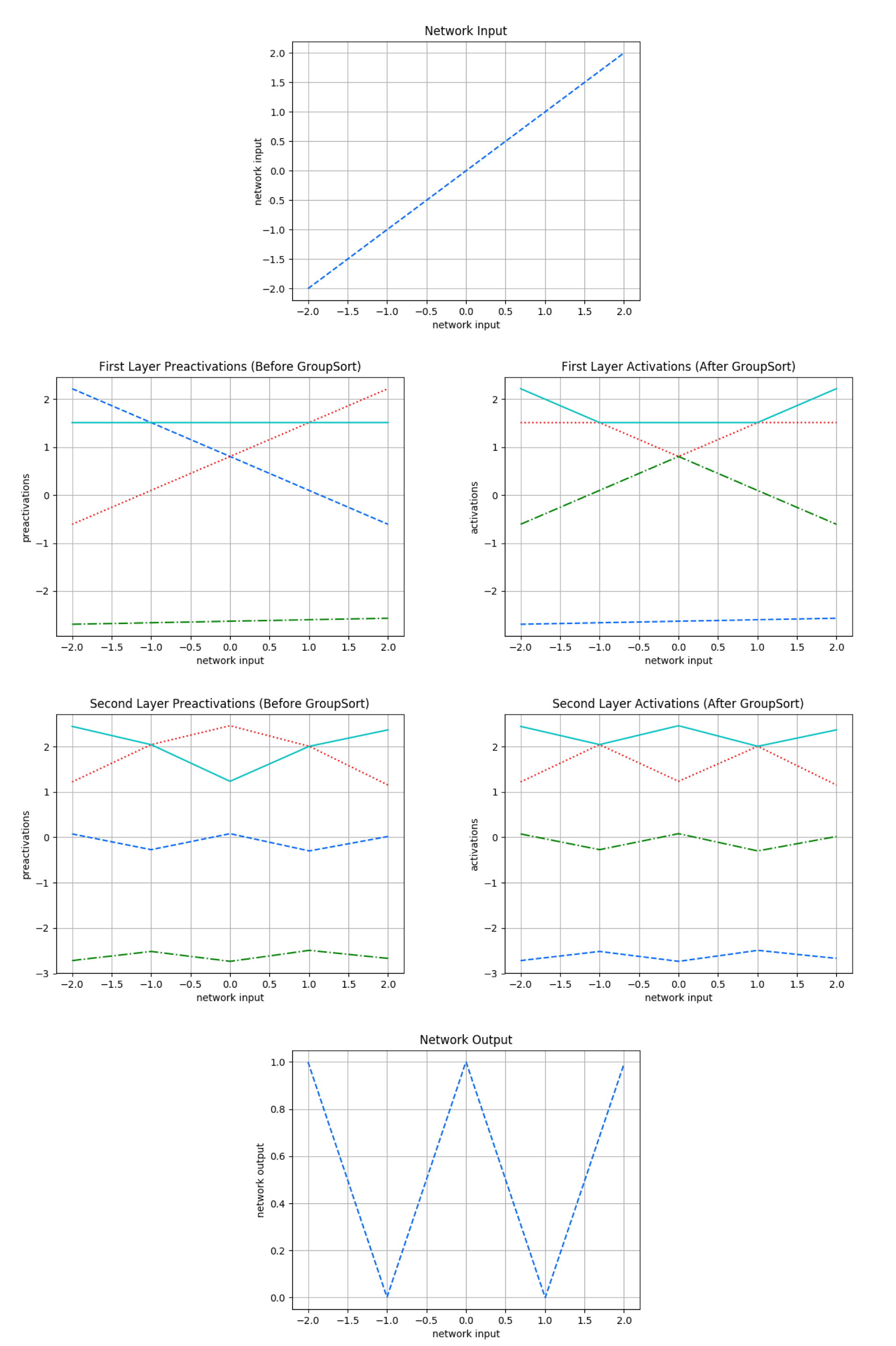}
\caption{Visualization of the pre-activations and activations of a two hidden layer GroupSort network that is approximating a curve resembling the letter "W" with a slope of 1 almost everywhere. The network has four units in its hidden layers. }
\label{f:two_kinks}
\end{figure}

\subsection{GroupSort and other activations} 
\label{app:groupsort_and_other}
Here we show that GroupSort can recover ReLU, Leaky ReLU, concatenated ReLU, and maxout activation functions. We first show that MaxMin can recover ReLU and its variants. Note that,
\begin{equation}
    \mathbf{MaxMin}(
\left[\begin{array}{c}
x \\
0
\end{array}\right]) = \left[\begin{array}{c}
ReLU(x) \\
-ReLU(-x)
\end{array}\right]
\end{equation}
By inserting $0$ elements into the pre-activations and then applying another linear transformation after MaxMin we can output either ReLU or concatenated ReLU. Explicitly,
\begin{equation}\label{eqn:maxmin_to_relu}
    \left[1 \quad 0\right] \mathbf{MaxMin}(
\left[\begin{array}{c}
x \\
0
\end{array}\right]) = ReLU(x)
\end{equation}
\begin{equation}
    \left[\begin{array}{cc}
1 & 0 \\
0 & -1
\end{array}\right] \mathbf{MaxMin}(
\left[\begin{array}{c}
x \\
0
\end{array}\right]) = \left[\begin{array}{c}
ReLU(x) \\
ReLU(-x)
\end{array}\right]
\end{equation}
If instead of adding $0$ to the preactivations we added $ax$ we could recover Leaky ReLU by using a linear transformation to select $\max(x,ax)$ (similarly to Equation~\ref{eqn:maxmin_to_relu}).

To recover maxout with groups of size $k$, we perform GroupSort with groups of size $k$ and use the next linear transformation to select the first element of each group after sorting. 

\subsection{Expressivity of GroupSort}
We show that GroupSort activation with different grouping sizes have the same expressive power. We also show that neural networks built with GroupSort activation and absolute value activation have the same expressive power. 

\textbf{Expressivity of Different Grouping Sizes}\label{app:diff_group_size}
FullSort can implement MaxMin by  "chunking" the biases in pairs. To be more precise, let $x_{max} = \sup_{\rvx \in \mathcal{X}} ||\rvx||_{\infty}$ where $\mathcal{X}$ represents the domain, and $\vb = [b_{1}, b_{2}, ... ,b_{n}]^{T}$ where $x_{max} < b_{1} = b_{2} \ll b_{3} = b_{4} \ll \dots \ll b_{n-1} = b_{n}$ ($\ll$ denotes differing by at least $x_{max}$). We can write: 
\[
\mathbf{MaxMin}(\rvx) = \mathbf{FullSort}(\mathbf{I}\rvx + \vb) - \vb,
\]
where $\mathbf{I}$ denotes the identity matrix.

\textbf{Expressivity of GroupSort and Absolute Value Networks}
\label{expressivity}
Under the matrix 2-norm constraint, neural neural networks built with GroupSort activation and absolute value activation have the same expressive power. The two operations can be written in terms of each other, as can be seen below: 

\begin{gather*}
\label{abs_to_maxmin}
\left[\begin{array}{cc}
\mathbf{max}(x) \\
\mathbf{min}(y)
\end{array}\right] 
= 
\mathbf{M} \text{ }
\mathbf{abs}(
\mathbf{M} 
\left[\begin{array}{cc}
x \\
y
\end{array}\right]  + 
\left[\begin{array}{cc}
B \\
0
\end{array}\right] ) - 
\left[\begin{array}{cc}
\sqrt{2}B \\
0
\end{array}\right] \\
\text{where} \\
\mathbf{M} = \left[\begin{array}{cc}
\frac{1}{\sqrt{2}} \quad \frac{1}{\sqrt{2}} \\
\frac{1}{\sqrt{2}} \quad \frac{-1}{\sqrt{2}}
\end{array}\right]
\end{gather*}
\begin{equation*}
\label{maxmin_to_abs}
    \mathbf{abs}(x) = 
        \left[\begin{array}{cc}
        \frac{1}{\sqrt{2}} \quad -\frac{1}{\sqrt{2}}
        \end{array}\right]
        \mathbf{MaxMin}(
        \left[\begin{array}{cc}
        \frac{1}{\sqrt{2}} \\ \frac{-1}{\sqrt{2}}
        \end{array}\right]x
        )
\end{equation*}
In Equation \ref{abs_to_maxmin}, the value of $B$ is chosen such that $2\rvx + \sqrt{2}B > 0$ for all $\rvx$ in the domain. Note that all the matrices in these constructions satisfy the matrix 2-norm constraint. 

\subsection{Computational Considerations}
\label{app:groupsort_computational}
Let $n$ be the total number of pre-activations and $k$ be the size of the groups used in GroupSort. Then, a naive CPU implementation of GroupSort has a complexity of $\frac{n}{k}\mathcal{O}(k \log{k})$. However, this operation can be parallelized on GPU. We use the built-in GPU accelerated sorting implementation in PyTorch \citep{paszke2017automatic}  in our experiments. We find that the additional computational cost added by the GroupSort activation is dwarfed by the other components of network training and inference.  

Note that MaxMin (GroupSort with a group size of 2) can be implemented either by concatenating the results of a Maxout and Minout operations (in which case it is roughly twice as costly as a single MaxOut operation), or as in its own custom CUDA  kernel \citep{nvidia2010programming}, in which case it can be as efficient as the ReLU operation.

\section{Implementing norm constraints}

\subsection{Comparing Bj\"orck and Parseval}\label{sec:bjorck_vs_parseval}

In \citet{cisse2017parseval}, the authors motivate an update to the weight matrices by considering the gradient of a regularization term, $\frac{\beta}{2}||W^T W - I||_F^2$. By subtracting this gradient from the weight matrices they push them closer to the Stiefel manifold. The final update is given by,

\begin{equation}
    W \leftarrow W (I + \beta) - \beta W W^T W
\end{equation}

Note that when $\beta=0.5$ this update is exactly the first order ($p=1$) update from Equation~\ref{eqn:bjorck_update}, with a single iteration. Compared to our approach, the key difference in Parseval networks is that the weight matrix update is applied \emph{after} the primary gradient update. Instead, we utilize Equation~\ref{eqn:bjorck_update} during forward pass to optimize directly on the Stiefel manifold. This is more expensive but guarantees that the weight matrices are close to orthonormal during training.

\textbf{Choice of $\beta$} We can relate the first order Bj\"orck algorithm to the Parseval update by setting $\beta=0.5$. However, in practice Parseval networks are trained with very small choices of $\beta$, for example $\beta=0.0003$. When $\beta$ is small the algorithm still converges to an orthonormal matrix but much more slowly. Figure~\ref{fig:bjorck_vs_beta} shows the maximum and minimum singular values of matrices which have undergone 50 iterations of the first order Bj\"orck scheme for varying choices of $\beta<0.5$. When $\beta$ is much smaller than $0.5$ the matrices may be far from orthonormal. We also show how the maximum and minimum singular values vary over the number of iterations when $\beta=0.0003$ (a common choice for Parseval networks) in Figure~\ref{fig:bjorck_vs_iters}. This has practical implications for Parseval training, particularly when using early stopping, as the weight matrices may be far from orthonormal if the gradients are relatively large compared to the update produced by the Bj\"orck algorithm. We observed this effect empirically in our MNIST classification experiments but found that Parseval networks were still able to achieve a meaningful regularization effect.

\begin{figure}[t!]
    \centering
    \includegraphics[width=0.72\linewidth]{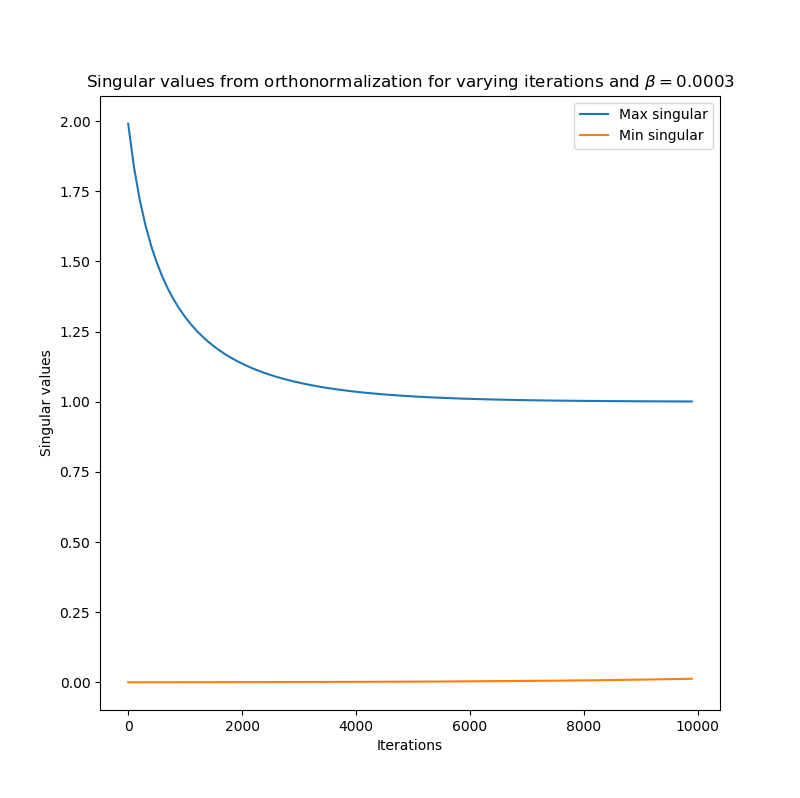}
    \caption{Convergence of the Bj\"orck algorithm for increasing iterations with $\beta=0.0003$. The largest and smallest singular values are shown after each iteration. }
    \label{fig:bjorck_vs_iters}
    \centering
    \includegraphics[width=0.72\linewidth]{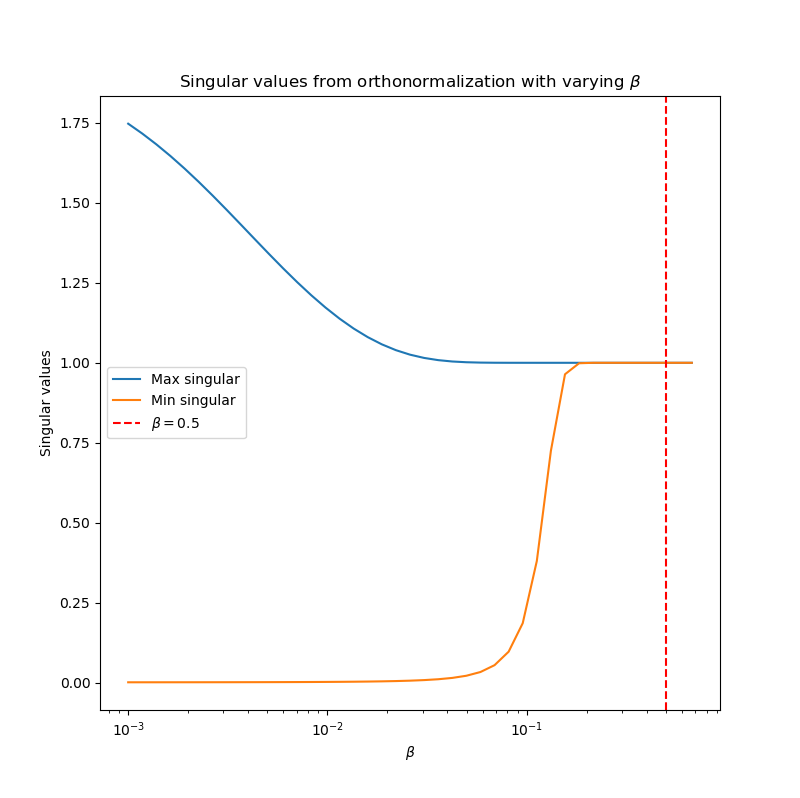}
    \caption{Convergence of the Bj\"orck algorithm for different choices of $\beta$. The largest and smallest singular values are shown after 50 iterations of the algorithm.}
    \label{fig:bjorck_vs_beta}
    \vspace{-0.3cm}
\end{figure}



\subsection{Comparing Bj\"orck and Spectral Normalization}
\label{sec:bjorck_vs_spectral}
Spectral Normalization \citep{miyato2018spectral} enforces the largest singular value of each weight matrix to be less than 1 by estimating the largest singular value and left/right singular vectors using power iteration, and normalizing the weight matrix using these during each forward pass. While this constraint does allow \textit{all} singular values of the weight matrix to be 1, we have found that this rarely happens in practice. Hence, enforcing the 1-Lipschitz constraint via spectral normalization doesn't guarantee gradient norm preservation.

We demonstrate the practical consequences of the inability of spectral normalization to preserve gradient norm on the task of approximating high dimensional cones. In order to quantify approximation performance, we carefully pick two $n$ dimensional probability distributions such that 1) The Wasserstein Distance between them is exactly 1 and 2) the optimal dual surface consists of an $n-1$ dimensional cone with a gradient of 1 everywhere, embedded in $n$ dimensions. We later train 1-Lipschitz constrained neural networks to optimize the dual Wasserstein objective in Equation \ref{eq:dual_obj} and check how well the choice of architecture is able to approximate the dual surface. Architectures that can obtain tighter estimates of Wasserstein distance are more expressive. 

Figure \ref{fig:bjorck_vs_spectral} shows that neural networks trained with Bj\"orck orthonormalization not only are able to approximate high dimensional cones better than spectral normalization, but also converge much faster in terms of training iterations. The gap between these methods gets much more significant as the problem dimensionality increases. In this experiment, each network consisted of 3 hidden layers with 512 hidden units per layer, and was trained with the Adam optimizer \citep{kingma2014adam} with its default hyperparameters. Tuned learning rates of 0.01 for Bj\"orck and 0.0033 for spectral normalization were used. 

\begin{figure}
    \centering
    \includegraphics[width=0.79\linewidth]{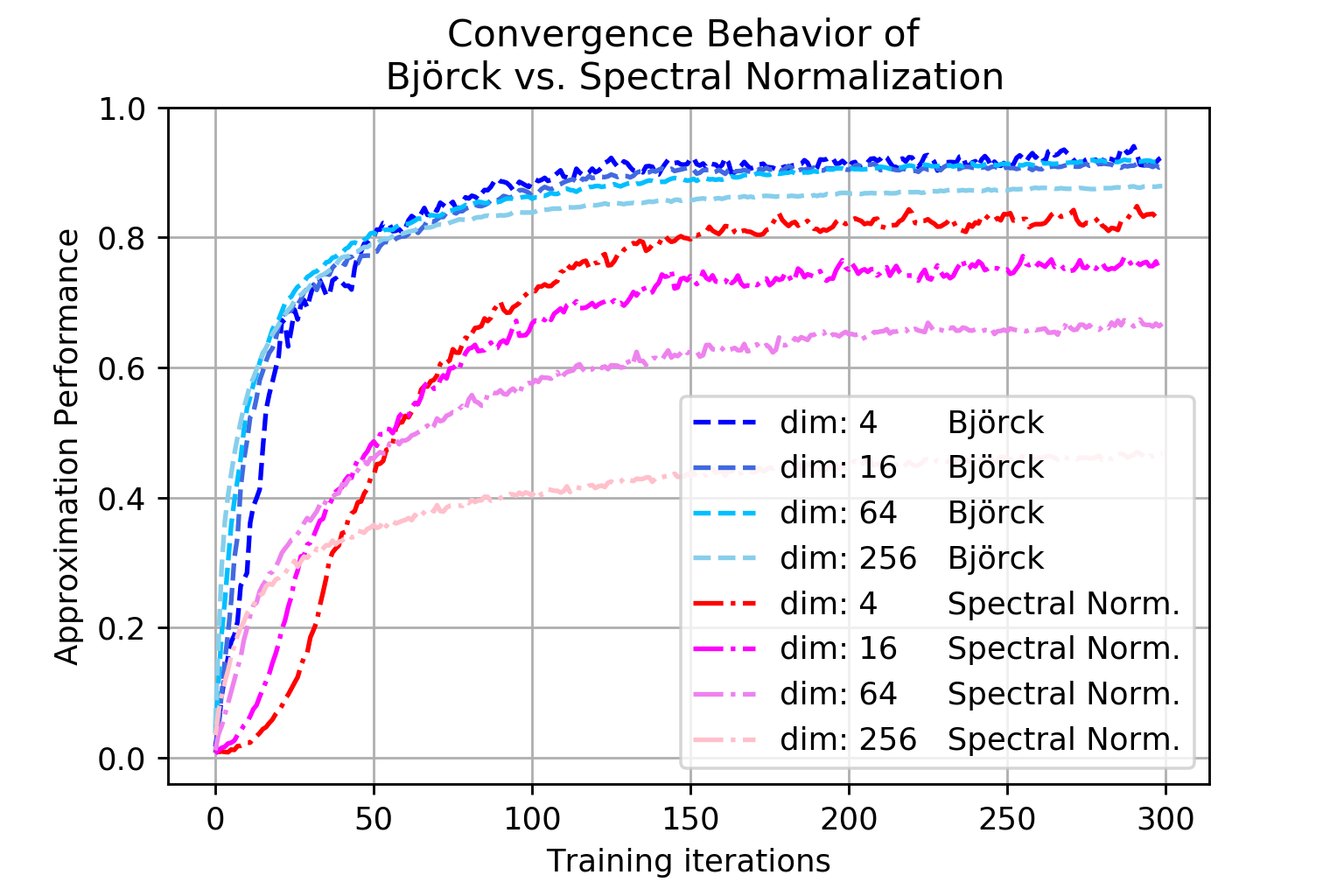}
    \caption{Comparing the performance of 1-Lipschitz neural nets using Bj\"orck orthonormalization vs. spectral normalization on the high dimensional cone fitting task (Section~\ref{sec:optimal_dual}). Using Bj\"orck orthonormalization leads to fasted convergence and better approximation performance, measured by the estimated Wasserstein Distance. }
    \label{fig:bjorck_vs_spectral}
\end{figure}

\subsection{Sufficient Condition for Convergence of Bj\"orck Orthonormalization }
\label{sec:bjorck__conv}
The Bj\"orck orthonormalization can be shown to always converge as long as the condition $||\mathbf{W}^{T}\mathbf{W} - \mathbf{I}||_{2} < 1$ is satisfied \citep{hasenclevervariational}. Since Bj\"orck orthonormalization is scale invariant, ($\mathbf{BJORCK}(\alpha \mathbf{W}) = \alpha \mathbf{BJORCK}(\mathbf{W})$) \citep{bjorck1971iterative}, the aforementioned sufficient condition can be implemented by simply scaling the weight matrix so that all of its singular values are less than or equal to 1 before orthonormalization. 

A scaling factor can be computed efficiently by considering the following matrix norm inequalities: 
\begin{align}
    \mathbf{\sigma}_{max} &\leq \sqrt{m*n}||\mathbf{W}||_{max} \label{app:max_norm} \\
    \mathbf{\sigma}_{max} &\leq \sqrt{n}||\mathbf{W}||_{1} \label{app:one_norm}\\ 
    \mathbf{\sigma}_{max} &\leq \sqrt{m}||\mathbf{W}||_{\infty} \label{app:inf_norm}
\end{align}
Above, $\mathbf{\sigma}_{max}$ corresponds to the largest singular value of the matrix and $m$ and $n$ stand for the number of rows and columns respectively. Note that computing the quantities on the right hand side of the inequalities involves at most summing over the rows or columns of the weight matrix, which is a cheap operation. 

\subsection{Computational Considerations Regarding Bjo\"rck Orthonormalization}
\label{app:computational_bjorck}
Bj\"orck orthonormalization is a costly operation even when implemented on a GPU, as it contains matrix-matrix products. In this section, we go over a few methods that can be used to accelerate Bj\"orck orthonormalization. Note that GroupSort's additional cost is \textbf{only incurred during training}: once the network is trained, it is possible to use the orthonormalized parameters as the network weights and bypass the orthonormalization step. 

\textbf{Enforcing a soft Lipschitz constraint throughout training:} We found in our experiments that one can run only a few iterations of Bj\"orck orthonormalization during training, then increase the number of iterations towards the end of training without hurting performance in classification tasks. 

\textbf{Performing spectral normalization before Bj\"orck orthonormalization:} By normalizing the weight matrices by their spectral norm before Bj\"orck orthonormalization, one can not only guarantee convergence (as described in \ref{sec:bjorck__conv}), but also faster convergence. As opposed to guaranteeing convergence by normalizing the weights using estimates of other norms (as in equations \ref{app:max_norm}, \ref{app:one_norm} and \ref{app:inf_norm}), normalizing by the spectral norm ensures the singular values of the matrix are closer to unity before Bj\"orck orthonormalization is run.

\textbf{Implementing Bj\"orck using Matrix-Vector products: } It is possible to rewrite the Bj\"orck algorithm in terms of Matrix-Vector products. This stems from the fact that we do not actually need to compute the entire orthonormal matrix $\tilde{A}$, but only a matrix-vector product $\tilde{A}v$, where $v$ are the activations of the previous network layer. Recall the expression for one Bj\"orck update:

\begin{equation*}
    A_{k+1} v = \frac{3}{2}A_{k} v - \frac{1}{2}  A_{k} A_{k}^T A_{k} v
\end{equation*}

Unfortunately, we cannot compute $A_{k+1}v$ from only $A_{k}v$ as we also need to compute $A_{k} A_{k}^T A_{k} v$ which requires $A_{k}$ explicitly. However, we can rewrite the above using two operations: $u \mapsto A_{k} u$ and $u \mapsto A_{k} A_{k}^T u$. To see why this is useful, we write,

\begin{align*}
    A_{k+1} A_{k+1}^T v &= (\frac{3}{2}A_{k} - \frac{1}{2} A_{k} A_{k}^T A_{k}) \\ &(\frac{3}{2}A_{k}^T - \frac{1}{2} A_{k}^T A_{k} A_{k}^T)v \\
    &= \frac{9}{4} A_{k}A_{k}^T v -\frac{3}{2} (A_{k} A_{k}^T)(A_{k} A_{k}^T)v \\
    & + \frac{1}{4} (A_{k} A_{k}^T)(A_{k} A_{k}^T)(A_{k} A_{k}^T)v
\end{align*}

Hence, we can write $u \mapsto A_{k+1}A_{k+1}u$ as a function of $u \mapsto A_{k}A_{k}u$. This allows us to recursively define the matrix-vector product of the $k$th iterate in terms of the previous iterates.

This method works very well for relatively few iterations (approximately less than 5) but scales poorly as the number of iterations increases. This is because the algorithm requires $O(3^k)$ matrix-vector products. Table~\ref{tab:bjorck_runtime} shows a runtime comparison for the original algorithm and the Matrix-Vector Product (MVP) for increasing iterations averaged over 10 runs. The weight matrices have a dimension of $1000 \times 1000$ and are normalized using the equation \ref{app:inf_norm} to guarantee convergence. 

\begin{table}
\centering
\resizebox{\linewidth}{!}{
 \begin{tabular}{|lrrrrr|} 
 \hline
 Iterations & 1 & 2 & 3 & 5 & 10 \\ \hline
 Full Bj\"orck & 0.020 & 0.039 & 0.059 & 0.095 & 0.21 \\ 
 MVP Bj\"orck  & 0.0002 & 0.0005 & 0.0011 & 0.012 & 2.88 \\ \hline
 Speedup Factor & 99.98x & 78.59x & 53.56x & 7.77x & 0.07x \\
 \hline
 \end{tabular}}
 \caption{Runtime (seconds) for full Bj\"orck and Matrix-Vector Product (MVP) Bj\"orck for a $1000\times 1000$ matrix, averaged over 10 runs. }
 \label{tab:bjorck_runtime}
\end{table}

\section{Projecting Vectors on $L_{\infty}$ Ball}
\label{app:proj_linf}
The following algorithm uses sorting to project vectors on $L_{\infty}$ balls \citep{condat2016fast}. 

\begin{algorithm}
  \caption{$L_{\infty}$ Projection via. Sorting}\label{alg:spec_jac}
  \begin{algorithmic}
  \STATE Input: $\vy \in \R^N $, Output: $\vx \in \R^N $ . \\
  \STATE Sort $\vy$ into $\vu$: $u_1 \geq \ldots \geq u_N $ . \\
  \STATE Set $K := \max_{1 \leq k \leq N}\{k | (\sum_{r=1}^{k} u_r - 1) / k < u_k \}$. \\
  \STATE Set $\tau := (\sum_{k}^{K}u_k - 1)/K$. \\
  \FOR{$n = 1, \ldots , N$}
    \STATE Set $x_n := \max_{y_n - \tau,0}$ \\
  \ENDFOR
    \end{algorithmic}
\end{algorithm}
\section{Non-expressive norm-constrained networks are linear}
\label{app:repr_thm}

\reprproof*

\begin{proof}
We can express the input-output Jacobian of a neural network as: 
\[
\frac{\partial f}{\partial \rvx} = \frac{\partial f}{\partial \vh_{L-1}}\frac{\partial \vh_{L-1}}{\partial \vz_{L-1}}\frac{\partial \vz_{L-1}}{\partial \rvx} = \mathbf{W}_L \frac{\partial \mathbf{\phi}(\vz_{L-1})}{\partial \vz_{L-1}}\frac{\partial \vz_{L-1}}{\partial \rvx}
\]
Note that $\mathbf{W}_L \in \mathbb{R}^{1 \times n_{L-1}}$. Moreover, using the sub-multiplicativity of matrix norms, we can write:
\begin{equation*}
\begin{aligned}
1 = \leftpar \frac{\partial f}{\partial \rvx} \rightpar_2 &\leq \leftpar \mathbf{W}_L \frac{\partial \mathbf{\phi}(\vz_{L-1})}{\partial \vz_{L-1}} \rightpar_2 \leftpar\frac{\partial \vz_{L-1}}{\partial \rvx} \rightpar_2 \\
&\leq \leftpar \mathbf{W}_L \rightpar_2 \leftpar \frac{\partial \mathbf{\phi}(\vz_{L-1})}{\partial \vz_{L-1}} \rightpar_2 \leftpar \frac{\partial \vz_{L-1}}{\partial \rvx} \rightpar_2 \leq 1
\end{aligned}
\end{equation*}
for $x$ almost everywhere. The quantity is also upper bounded by 1 due to the 1-Lipschitz property. Therefore, all of the Jacobian norms in the above equation must be equal to 1. Notably,
\[
\leftpar \mathbf{W}_L \frac{\partial \mathbf{\phi}(\vz_{L-1})}{\partial \vz_{L-1}} \rightpar_2 = 1 \quad \textrm{and} \quad \leftpar\mathbf{W}_{L} \rightpar_2 = 1 
\]
We then consider the following operation: 
\begin{equation}
\begin{aligned}
&\leftpar \mathbf{W}_{L} \rightpar_2^{2} - \leftpar \mathbf{W}_L \frac{\partial \mathbf{\phi}(\vz_{L-1})}{\partial \vz_{L-1}} \rightpar^2_2 \\
&= \sum_{i=1}^{n} (1 - \Big(\frac{\partial \mathbf{\phi}(\vz_{L-1})}{\partial \vz_{L-1}}\Big)^{2}_{ii})(W_{L,i})^{2} = 0
\end{aligned}
\end{equation}
\label{key_ineq}
We have $0 \leq \frac{\partial \phi}{\partial \vz_L} \leq 1$ as $\phi$ is 1-Lipschitz and monotonically increasing. Therefore, we must have either $\frac{\partial \phi}{\partial \vz_L}_{ii} = 1$ almost everywhere, or $\mathbf{W}_{L,i} = 0$. Thus we can write,
\begin{equation*}
\begin{aligned}
\vz_{L} &= \sum_{i=1}^{m} W_{L,i} \phi(\vz_{L-1})_i + b_{L} \\
&= \sum_{i: W_{L,i} \neq 0} W_{L,i} \phi(\vz_{L-1})_i + b_{L} \\
&= \sum_{i: W_{L,i} \neq 0} W_{L,i} \vz_{L-1, i} + b_{L}
\end{aligned}
\end{equation*}
Then $\vz_L$ can be written as a linear function of $\vz_{L-1}$ almost everywhere and by Lipschitz continuity we must have that $\vz_L$ \emph{is} a linear function of $\vz_{L-1}$. In particular, we can write $\vz_{L} = \mathbf{W}_L \mathbf{W}_{L-1} \vh_{L-2} + (\mathbf{W}_L \vb_{L-1} + b_{L})$, collapsing the last two layers into a single linear layer, with weight matrix $\mathbf{W}_L \mathbf{W}_{L-1} \in \mathbb{R}^{1\times n_{L-2}}$ and scalar bias $\mathbf{W}_L \vb_{L-1} + b_L$.

From here we can apply the exact same argument as above to $\phi(\rvz_{L-2})$, reducing the next layer to be linear. By repeating this all the way to the first linear layer we collapse the network into a single linear function.
\end{proof}
\equivweights*
\begin{proof}
Take a weight matrix $\mathbf{W}_i$, for $i < L$. By the argument in the proof of Theorem~\ref{repr_proof}, this matrix must preserve the norm of gradients during backpropagation. That is,
\[
1 =  \leftpar \frac{\partial f}{\partial \vz_{i}} \mathbf{W}_i \rightpar_2
\]
Using the singular value decomposition, we write $\mathbf{W}_i = \mathbf{U}\mathbf{\Sigma}\mathbf{V}^T$. We then define $\tilde{\mathbf{W}}_i = \mathbf{U}\tilde{\mathbf{\Sigma}}\mathbf{V}^T$ where $\tilde{\mathbf{\Sigma}}$ has ones along the diagonal. Furthermore, define $\mathbf{W}^{(t)}_i = t \mathbf{W}_i + (1-t) \tilde{\mathbf{W}}_i$. Replacing $\mathbf{W}_i$ with $\mathbf{W}^{(t)}_i$ in the network:
\begin{equation*}
\begin{aligned}
\frac{\partial f}{\partial t} &= \frac{\partial f}{\partial \vz_i} \frac{\partial \vz_i}{\partial t}
= \frac{\partial f}{\partial \vz_i} (\mathbf{W}_i - \tilde{\mathbf{W}}_i) \vh_{i-1} \\ &= \frac{\partial f}{\partial \vz_i} \mathbf{U}(\mathbf{\Sigma}_i - \tilde{\mathbf{\Sigma}}_i)\mathbf{V}^T \vh_{i-1}
\end{aligned}
\end{equation*}
As the norm of $\frac{\partial f}{\partial \vz_i}$ is preserved by $\mathbf{W}_i$ we must have that $\vu = (\frac{\partial f}{\partial \vz_i}\mathbf{U})^T$ has non-zero entries only where the diagonal of $\mathbf{\Sigma}$ is 1. That is, $u_j = 0 \iff \mathbf{\Sigma}_{jj} < 1$. In particular, we have $\vu^T \mathbf{\Sigma}_i = \vu^T \tilde{\mathbf{\Sigma}}_i$ meaning $\frac{\partial f}{\partial t} = 0$. Thus, the output of the network is the same for all $t$, in particular for $t=0$ and $t=1$. Thus, we can replace $\mathbf{W}_i$ with $\tilde{\mathbf{W}}_i$ and the network output remains unchanged.

We can repeat this argument for all $i < L$ (for $i=1$ we adopt the notation $\vh_0 = \vx$, the input to the network). For $i=L$ the result follows directly.
\end{proof}
\section{Universal Approximation of 1-Lipschitz Functions}
\label{app:univ}

Here we present formal proofs related to finding neural network architectures which are able to approximate any 1-Lipschitz function. We begin with a proof of \Lemref{lemma:stoneweierstrass}.
\medskip
\swtheorem*

\begin{proof}
This proof follows a standard approach with small modifications. We aim to show that for any $g \in \lfuncs$ and $\epsilon > 0$ we can find $f \in L$ such that $||g-f||_{\infty} < \epsilon$ (i.e. the largest difference is $\epsilon$).

Fix $x \in X$. Then for each $y \in X$, we have an $f_{y} \in L$ with $f_{y}(x) = g(x)$ and $f_{y}(y) = g(y)$. This follows from the separation property of $L$ and, using the fact that $g$ is 1-Lipschitz, $|g(x) - g(y)| \leq d_{X}(x,y)$.

Define $V_{y} = \{ z \in X: f_{y}(z) < g(z) + \epsilon \}$. Then $V_{y}$ is open and we have $x,y \in V_{y}$. Therefore, the collection of sets $\{ V_{y}\}_{y\in X}$ is an open cover of $X$. By the compactness of $X$, there exists some finite subcover of $X$, say, $\{ V_{y_1}, \ldots , V_{y_n}\}$, with corresponding functions $f_{y_1}, \ldots ,f_{y_n}$.

Let $F_{x} = min(f_{y_1}, \ldots , f_{y_n})$. Since $L$ is a lattice we must have $F_{x} \in L$. And moreover, we have that $F_{x}(x) = g(x)$ and $F_{x}(z) < g(z) + \epsilon$, for all $z \in X$.

Now, define $U_{x} = \{z \in X: F_{x}(z) > g(z) - \epsilon \}$. Then $U_{x}$ is an open set containing $x$. Therefore, the collection $\{ U_{x} \}_{x \in X}$ is an open cover of $X$ and admits a finite subcover, $\{ U_{x_1}, \ldots , U_{x_m}\}$, with corresponding functions $F_{x_1}, \ldots , F_{x_m}$.

Let $G = max(F_{x_1}, \ldots , F_{x_m}) \in L$. We have $G(z) > g(z) - \epsilon$, for all $z \in X$.

Combining both inequalities, we have that $g(z) - \epsilon < G(z) < g(z) + \epsilon$, for all $z \in X$. Or more succinctly, $||g-G||_{\infty} < \epsilon$. The result is proved by taking $f=G$.
\end{proof}
We now proceed to prove \Thmref{thm:universalapprox}.
\bigskip
\lpuniv*
\begin{proof}
The first property we require is separation of points. This follows trivially as given four points satisfying the required conditions we can find a linear map with the required $L_{p,\infty}$ matrix norm that fits them. It remains then to prove that we can construct a lattice under this constraint. We begin by considering two 1-Lipschitz neural networks, $f$ and $g$. We wish to design an architecture which is guaranteed to be 1-Lipschitz and can represent $\max(f,g)$ and $\min(f,g)$.

The key insight is that we can split the network into two parallel \emph{channels} each of which computes one of $f$ and $g$. At the end of the network, we can then select one of these channels depending on whether we want the max or the min.

Each of the networks $f$ and $g$ is determined by a set of weights and biases, we will denote these $[\rmW^{f}_{1}, \rvb^{f}_{1}, \ldots, \rmW^{f}_{n}, b^{f}_{n}]$ and $[\rmW^{g}_{1}, \rvb^{g}_{1}, \ldots,  \rmW^{g}_{n}, \rvb^{g}_{n}]$ for $f$ and $g$ respectively. For now, assume that these networks are of equal depth (we can lift this assumption later) however we make no assumptions on the width. We will now construct $h = max(f,g)$ in the form of a 1-Lipschitz neural network. We will design a network $h$ which first concatenates the first layers of networks $f$ and $g$ and then computes $f$ and $g$ separately before combining them at the end.

We take the first weight matrix of $h$ to be $\rmW^{h}_{1} = [\rmW^{f}_{1} \ \rmW^{g}_{1}]^T$, the weight matrices of $f$ and $g$ stacked vertically. This matrix necessarily satisfies $||\rmW^{h}_{1}||_{p, \infty} = 1$. Similarly, the bias will be those from the first layers of $f$ and $g$ stacked vertically. Then the first layer's pre-activations will be exactly the pre-activations of $f$ and $g$ stacked vertically.

For the following layers, we construct the biases in the same manner (vertical stacking). We construct the weights by constructing new block-diagonal weight matrices. That is, given $\rmW_{i}^{f}$ and $\rmW_{i}^{g}$, we take
\[ W_{i}^{h} = \left[\begin{array}{cc}

W_{i}^{f} \quad 0 \\
0 \quad W_{i}^{g}
\end{array}\right]\]

This matrix also has $\infty$-norm equal to 1. We repeat this for each of the layers in $f$ and $g$ and end up with a final layer which has two units, $f$ and $g$. We can then take MaxMin of this final layer and take the inner product with $[1,0]$ to recover the max or $[0, 1]$ for the min.

Finally, we must address the case where the depth of $f$ and $g$ are different. In this case we notice that we are able to represent the identity function with MaxMin activations. To do so observe that after the pre-activations have been sorted we can multiply by the identity and the sorting activation afterwards will have no additional effect. Therefore, for the channel that has the smallest depth we can add in these additional identity layers to match the depths and resort to the above case.

We have shown that the set of neural networks is a lattice which separates points, and thus by \Lemref{lemma:stoneweierstrass} it must be dense in $\lfuncs$.
\end{proof}

Note that we could have also used the maxout activation \cite{pmlr-v28-goodfellow13} to complete this proof. This makes sense, as the maxout activation is also norm-preserving in $L_\infty$. However, this does not hold when using a 2-norm constraint on the weights. We now present several consequences of the theoretical results given above.

This result can be extended easily to vector-valued Lipschitz functions with respect to $L_\infty$ distance by noticing that the space of such 1-Lipschitz functions is a lattice. We may apply the Stone-Weierstrass proof to each of the coordinate functions independently and use the same construction as in \Thmref{thm:universalapprox} modifying only the last layer which will now reorder the outputs of each function to do a pairwise comparison and then select the relevant components to produce the max or the min.

\linfuniv*

\begin{proof}
Note that given two functions, $g,f: \reals^n \rightarrow \reals^m$ which are 1-Lipschitz with respect to the $L_\infty$ metric, their element-wise max (or min) is also 1-Lipschitz with respect to the $L_\infty$ metric. Consider the element-wise components of such an $f$, written $f = (f_1,\ldots,f_m)$. We can apply the Stone-Weierstrass theorem (\Lemref{lemma:stoneweierstrass}) to each of the components independently, such that if the same conditions apply (trivially extended to $\reals^m$) the Lattice is dense. Thus, as in the proof of \Thmref{thm:universalapprox}, it suffices to find a network $h \in \mathcal{LN}^{m}_{\infty}$ which can represent the max or min of any other networks, $f,g \in \mathcal{LN}^{m}_{\infty}$.

In fact, we can use almost exactly the same construction as in the proof of \Thmref{thm:universalapprox}. We follow the same initial steps by concatenating weight matrices and constructing block-diagonal matrices from the two networks. After doing this for all layers in the networks $f$ and $g$, we will output $[f_1, \ldots, f_m, g_1, \ldots g_m$]. We can then permute these entries using a single linear layer to  produce $[f_1, g_1, f_2, g_2, \ldots, f_m, g_m]$ finally we take MaxMin and use the final weight matrix to select either $\max(f,g)$ or $\min(f,g)$.
\end{proof}

\section{Spectral Jacobian Regularization}\label{sec:jac_reg}

Most existing work begins with the goal of constraining the spectral norm of the Jacobian and proceeds to achieve this by placing constraints on the weights of the network \citep{yoshida2017spectral}. While not the main focus of our work, we propose a simple new technique which allows us to directly regularize the spectral norm of the Jacobian, $\sigma(J)$. This method differs from the ones described previously as the Lipschitz constant of the entire network is regularized using a single term, instead of at the layer level.

The intuition for this algorithm follows that of \citet{yoshida2017spectral}, who apply power iteration to estimate the singular values of the weight matrices online. The authors also discuss computing the spectral radius of the Jacobian directly, and related quantities such as the Frobenius norm, but dismiss this as being too computationally expensive.

Power iteration can be used to compute the leading singular value of a matrix $J$ with the following repeated steps,
\[ \rvv_{k} = J^T \rvu_{k-1} / ||J^{T} \rvu_{k-1}||_2, \rvu_{k} = J\rvv_{k} / ||J\rvv_{k}\|_2 \]
Then we have $\sigma(J) \approx \rvu^{T} J \rvv$. There are two challenges that must be overcome to implement this in practice. First, the algorithm requires higher order derivatives which leads to increased computational overhead. However, the tradeoff is often reasonable in practice, see e.g. \citet{drucker1992improving}. Second, the algorithm requires both Vector-Jacobian products and Jacobian-Vector products. The former can be computed with reverse-mode automatic differentiation but the latter requires the less common forward-mode. Fortunately, one can recover forward-mode from reverse mode by constructing Vector-Jacobian products and utilizing the transpose operator \citep{townsendTrick}. We can re-use the intermediate reverse-mode backpropagation within the algorithm which further reduces the computational overhead. The algorithm itself is presented as Algorithm \ref{alg:spec_jac}.

\begin{algorithm}
  \caption{Spectral Jacobian Regularization}\label{alg:spec_jac}
  \begin{algorithmic}
  \STATE Initialize $u$ randomly, choose hyperparameter $\lambda > 0$ \\
  \FOR{data batch (X,Y)}
    \STATE Compute logits $f_\theta(X)$ \\
    \STATE Compute loss $\mathcal{L}(f_\theta(X),Y)$ \\
    \STATE Compute $\rvg = \rvu^T \dfrac{\partial \rvf}{\partial \rvx}$, using reverse mode \\
    \STATE Set $\rvv =  \rvg/ ||\rvg||_2$ \\
    \STATE Compute $\rvh = (\rvv^T \dfrac{\partial \rvg}{\partial \rvu})^T = \dfrac{\partial \rvf}{\partial \rvx} \rvv$, using reverse mode \\
    \STATE Update $\rvu = \rvh/||\rvh||_2$ \\
    \STATE Compute parameter update from $\dfrac{\partial}{\partial \theta}\left (\mathcal{L} + \lambda \rvu^T \rvh \right  )$
    \ENDFOR
    \end{algorithmic}
\end{algorithm}

We present this algorithm primarily to be used for regularization but this could also be used to approximately control the Lipschitz constraint by rescaling the output of the entire network by the estimate of the Jacobian spectral norm similar to spectral normalization \cite{miyato2018spectral}. 

\section{Additional Experiments}\label{sec:additional_exp}

We present additional experimental results. 

\subsection{Classification}\label{sec:app_classification}

We compared a wide range of Lipschitz architectures and training schemes on some simple benchmark classification tasks. We demonstrate that we are able to learn Lipschitz neural networks which are expressive enough to perform classification without sacrificing performance.

\paragraph{MNIST Classification} We explored classification with a 3-layer fully connected network with 1024 hidden units in each layer. Each model was trained with the Adam optimizer \citep{kingma2014adam}. The results are presented in Table. \ref{tab:mnist_classify}.

For all models the GroupSort activation is able to perform classification well - especially when the Lipschitz constraint is enforced. Surprisingly, we found that we could apply the GroupSort activation to sort the entire hidden layer and still achieve reasonable classification performance, even with dropout.
In terms of classification performance, spectral Jacobian regularization was most effective (Appendix~\ref{sec:jac_reg}).

While the Parseval networks are capable of learning a strict Lipschitz constraint this does not always hold in practice. A small beta value leads to slow convergence towards orthonormal weights. When early stopping is used, which is typically important for good validation accuracy, it is difficult to ensure that the resulting network is 1-Lipschitz. 

\begin{table}
    \centering
    \resizebox{\linewidth}{!}{
    \begin{tabular}{|l|r|r|r|r|r|}
    \hline
    & ReLU & MaxMin & GS(4) & FullSort & Maxout \\ \hline
    Standard & 1.61 & 1.47 & 1.62 & 3.53 & 1.40 \\
    Dropout & 1.27 & 1.37 & 1.29 & 3.62 & 1.27 \\
    Bj\"orck & 1.54 & 1.25  & 1.43 & 2.06 & 1.43 \\
    Spectral Norm & 1.54 & 1.26 & 1.32 & 2.94 & 1.26 \\
    Spectral Jac & 1.05 & 1.09 & 1.24 & 1.93 & 1.02 \\
    Parseval & 1.43 & 1.40 & 1.44 & 3.36 & 1.35 \\
    $L_\infty$ & 2.25 & 2.28 & 2.22 & 4.88 & 1.98 \\ \hline
    \end{tabular}}
    \caption{\textbf{MNIST classification} Test error shown for different architectures and activations (GS stands for GroupSort.).}\label{tab:mnist_classify} 
\end{table}%

\paragraph{Classification with little data} While enforcing the Lipschitz constraint aggressively could hurt overall predictive performance, it decreases the generalization gap substantially. Motivated by the observations of \citet{bruna2013scattering} we investigated the performance of Lipschitz networks on small amounts of training data, where learning robust features to avoid overfitting is critical.

For these experiments we kept the same network architecture as before. We trained standard unregularized networks, networks with dropout, networks regularized with weight decay, and 1-Lipschitz neural networks enforced with the Bj\"orck algorithm. We made use of a LeNet-5 architecture, with convolutions and max-pooling --- the latter prevents norm preservation and thus may reduce the effectiveness of MaxMin substantially. We found that Dropout was the most effective regularizer in this case but confirmed that networks with Lipschitz constraints were able to significantly improve generalization. Full results are in Table~\ref{tab:small_mnist_classify}.

\begin{table*}[]
    \centering
    \begin{tabular}{|l|r|r|r|r|r|r|r|r|}
    \hline
    \multirow{2}{*}{Data Size} & \multicolumn{2}{|c|}{Standard} & \multicolumn{2}{|c|}{Dropout} & \multicolumn{2}{|c|}{Weight Decay} & \multicolumn{2}{|c|}{Bj\"orck} \\ \cline{2-9}
    & ReLU & MaxMin & ReLU & MaxMin & ReLU & MaxMin & ReLU & MaxMin  \\ \hline
    300 & 12.40 & 12.14 & 7.30 & 10.64 & 11.06 & 10.81 & 8.12 & 7.81 \\
    500 & 8.57 & 9.13 & 5.54 & 6.15 & 7.33 & 7.50 & 5.96 & 6.98 \\
    1000 & 5.95 & 6.23 & 3.70 & 4.58 & 5.14 & 6.05 & 4.45 & 4.54 \\
    5000 & 2.54 & 2.51 & 1.84 & 2.15 & 2.31 & 2.55 & 2.23 & 2.31  \\
    10000 & 1.77 & 1.76 & 1.26 & 1.70 & 1.58 & 1.57 & 1.66 & 1.64  \\ \hline
    \end{tabular}
    \caption{\textbf{MNIST Classification with limited data} Test error for varying architectures and activations per training data size.}\label{tab:small_mnist_classify}
    \vspace{-0.3cm}
\end{table*}%

\paragraph{Classification on CIFAR-10} We briefly explored classification on CIFAR-10 using Wide ResNets (Depth 28, Width 4) \citep{Zagoruyko2016WideRN, he2016deep}. We performed these experiments primarily to explore the effectiveness of the MaxMin activation in a more challenging setting. We used the optimal optimization hyperparameters for ReLU with SGD and performed a small search over regularization parameters for Parseval and Spec Jac regularization. We present results in Table~\ref{tab:cifar_classify}. We found that MaxMin performed comparably to ReLU in this setting and hope to explore this further in future work.

\begin{table*}
    \centering
    \begin{tabular}{|l|r|r|r|r|r|r|}
    \hline
    \multirow{2}{*}{} & \multicolumn{2}{|c|}{Standard} & \multicolumn{2}{|c|}{Parseval} & \multicolumn{2}{|c|}{Spec Jac Regularization} \\ \cline{2-7}
    & ReLU & MaxMin & ReLU & MaxMin & ReLU & MaxMin  \\ \hline
    CIFAR-10 & 95.29 & 94.57 & 95.45 & 94.83 & 95.44 & 94.62 \\ \hline
    \end{tabular}
    \caption{\textbf{CIFAR-10 Classification} Test accuracy for Wide ResNets (Depth 28, Width 4) with varying activations and training schemes.}\label{tab:cifar_classify}
\end{table*}%

\begin{figure}[t!]
    \centering
    \begin{minipage}{0.58\linewidth}
      \centering
      \includegraphics[width=\linewidth]{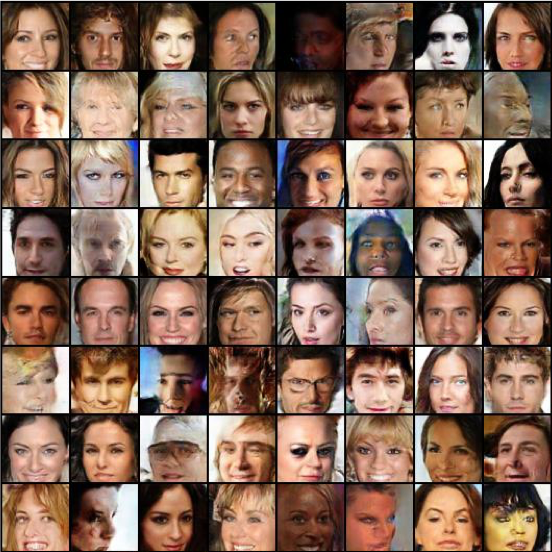}\\
      (a) Leaky-ReLU Critic
    \end{minipage}\hspace{1cm}
    \begin{minipage}{0.58\linewidth}
      \centering
      \includegraphics[width=\linewidth]{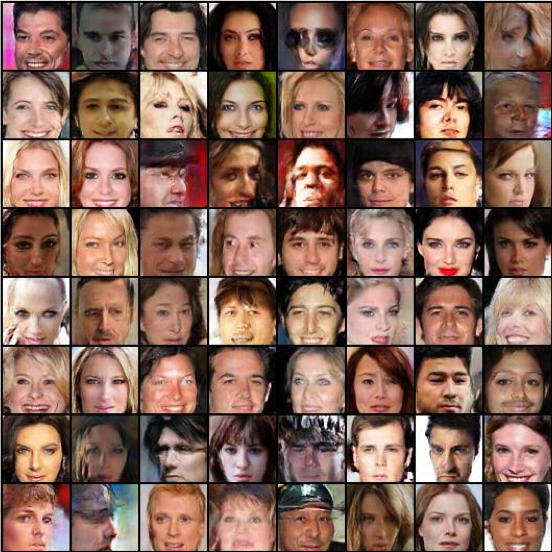}\\
      (b) MaxMin Critic
    \end{minipage}
    \caption{Generated images from WGAN-GP models trained on the CelebA dataset.}\label{fig:celeba_wgan}
\end{figure}
\vspace{-0.1cm}
\subsection{Training WGAN-GP}

We found that the MaxMin activation could also be used as a drop-in replacement for ReLU activations in WGAN architectures that utilize a gradient-norm penalty in the training objective. We took an existing implementation of WGAN-GP which used a fully convolutional critic network with 5 layers and LeakyReLU activations. The generator used a linear layer followed by 4 deconvolutional layers. We trained this model with the tuned hyperparameters for the LeakyReLU activation and then used the same settings to train a model with MaxMin acivations. We defer a more thorough study of this setting to future work but present here the output of the trained generators after 50 epochs of training on the CelebA dataset \citep{liu2015faceattributes} in Figure~\ref{fig:celeba_wgan}.

\subsection{Dynamical Isometry}\label{app:dyn_iso}

In Figure~\ref{fig:all_sn_hist} we plot the distribution of all singular values of ReLU and GroupSort 2-norm-constrained networks trained as MNIST classifiers, with a Lipschitz constant of 10. While the ReLU singular values are spread between 4-8 the GroupSort network concentrates the singular values in range 9-10. Dynamical isometry \citep{pennington2017resurrecting} requires all Jacobian singular values to be concentrated around 1. Using 2-norm constraints and GroupSort activations we are able to achieve dynamical isometry throughout training.

\begin{figure}[b!]
    \centering
    \includegraphics[width=0.8\linewidth]{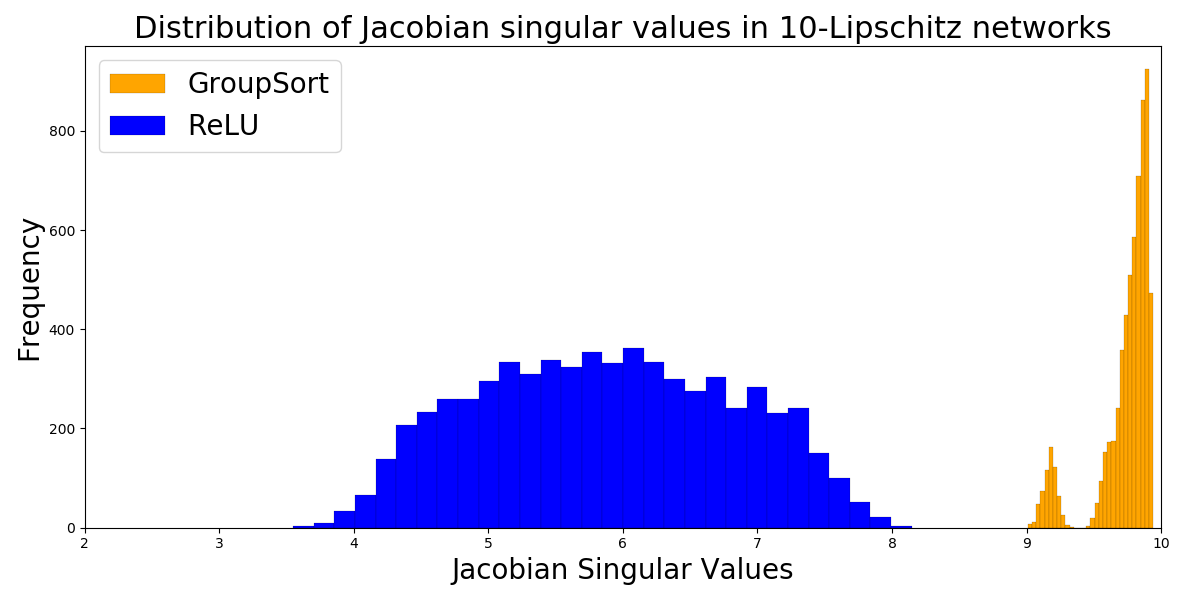}
    \caption{\textbf{Jacobian singular values distribution} We compare the Jacobian singular values of ReLU and GroupSort networks.}
    \label{fig:all_sn_hist}
\end{figure}

\section{Experiment Details}\label{app:exp_details}
We present additional experimental details. 

\subsection{Designing Synthetic Distributions for Wasserstein Distance Estimation}
\label{app:surf}
\textbf{Absolute value}
We pick $\displaystyle p_{1}(\rx) =  \delta_{0}(x)$ and $\displaystyle p_{2}(\rx) =  \frac{1}{2}\delta_{-1}(x) + \frac{1}{2}\delta_{1}(x)$, where $\delta_{\alpha}(x)$ stands for the Dirac delta function located at $\alpha$. The optimal dual surface learned while computing the Wasserstein distance between $\displaystyle p_{1}$ and $\displaystyle p_{2}$ is the absolute value function. This also makes intuitive sense, as the function that assigns "as low values as possible" at $x=0$ and assigns "as high values as possible" at $x=-1$ and $x=1$ while satistying 1-Lipschitz condition must be the absolute value function. 

The transport plan that minimizes the primal objective will simply be to map the center Dirac delta equally to the ones near it. This leads to a Wasserstein distance of 1. 

The networks we trained had 3 hidden layers each with 128 hidden units. We report the results obtained with the Aggretated Momentum optimizer (AggMo) \citep{lucas2018aggregated} with its default parameters, as it lead to faster convergence in our experiments compared to Adam optimizer \citep{kingma2014adam}. We note that the choice of optimizer had minimal impact on the final Wasserstein Distance estimates. 

\textbf{Multiple 2D Circular Cones}
 We describe the probability distributions  $\displaystyle p_{1}$ and $\displaystyle p_{2}$ implicitly by describing how we sample from them. $\displaystyle p_{1}$ is sampled from by selecting one of the three points ($(-2, 0)$, $(0, 0)$ and $(2, 0)$) uniformly. $\displaystyle p_{2}$ is sampled from by first uniformly selecting one of the three points aforementioned, then uniformly sampling a point on the circle surrounding it, with radius 1. Wasserstein dual problem aims to find a Lipschitz function which assigns "as high as possible" values to the three points, and "as low as possible" values to the circles with radius 1 surrounding the three points. Hence, the optimal dual function must consist of three cones centered around $(-2, 0)$, $(0, 0)$ and $(2, 0)$. The behavior of the function outside this support doesn't have an impact on the solution. 
 
The optimal transport plan must map the probability mass to the nearby circles surrounding them uniformly. This leads to an Wasserstein distance of 1.0. 

The networks we trained had 3 hidden layers with 312 hidden units. We used the Aggretated Momentum optimizer (AggMo) \citep{lucas2018aggregated} with its default parameters. 

\textbf{$\vn$ Dimensional Circular Cones}
This is a simple extension of the absolute value case described above. 

We pick $\displaystyle p_{1}$ as the Dirac delta function located at the origin, and sample from  $\displaystyle p_{2}$ by uniformly selecting a point from high dimensional spherical shell with radius 1, centered at the origin. Following similar arguments developed for absolute value, it can be shown that the optimal dual function is a single high dimensional circular cone and the Wasserstein distance is also equal to unity. 

\subsection{Wasserstein Distance Estimation}
\label{app:wde}
The GAN variants we trained on MNIST and CIFAR10 datasets used the WGAN formulation first introduced in \citet{arjovsky2017wasserstein}. The architectures of the generator and critic networks were the same as the ones used in\citep{chen2016infogan}. For the subsequent task of Wasserstein distance estimation, the weights of the generator networks were frozen after the initial GAN training has converged. 

\begin{figure}[b!]
\centering
  \includegraphics[width=0.58\linewidth]{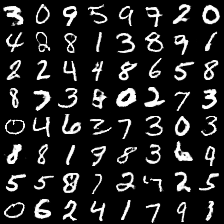}
  \includegraphics[width=0.58\linewidth]{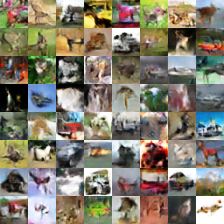}
\centering
\caption{Samples from WGANs trained on MNIST and CIFAR10 whose critics use gradient norm preserving units. }
\label{f:gan_outs}
\end{figure}

\subsection{Wasserstein GAN with 1-Lipschitz Layers}
\label{app:wgan}
We borrowed the discriminator and generator networks from \citet{chen2016infogan}, but switched the ReLU activations with MaxMin and replaced the convolutional and fully connected layers with their Bj\"orck counterparts. We didn't use batch normalization, as this would violate the Lipschitz constraint. 


\subsection{Classification}
For MNIST classification, we searched the hyperparameters as follows. For Bj\"orck, $L_\infty$ constrained, and Spectral Norm architectures we tried networks with a guaranteed Lipschitz constant of 0.1, 1, 10 or 100. For Parseval networks we tried $\beta$ values in the range 0.001, 0.01, 0.1, 0.5. For SpecJac regularization we scaled the penalty by 0.01, 0.05, or 0.1.

In order to scale the Lipschitz constant of the network, we introduce constant scaling layers in the network such that the product of the constant scale parameters is equal to the Lipschitz constant. As the activation functions are homogeneous, e.g.~$\text{ReLU}(a\rvx) = a \text{ReLU}(\rvx)$, this is equivalent to scaling the output of the network as described in Section~\ref{mthd}.

\subsection{Robustness and Interpretability}

For the adversarial robustness experiments we trained fully-connected MNIST classifiers with 3 hidden layers each with 1024 units. We used the $L_\infty$ projection algorithm referenced in Section~\ref{sec:constrained_linear_methods}. We applied the projection to each row in the weight matrices after each gradient update. 

Our implementation of the FGS attack is standard but we found that the loss proposed by \citet{carlini2016towards} (in particular, $f_6$ which the authors found most effective) was necessary to generate attacks for the Margin-0.3 MaxMin network (and produced stronger adversarial examples for the other networks). PGD also had difficulty generating adversarial examples for the Margin-0.3 MaxMin network. It was necessary to run PGD for 200 iterations and to use a scaled down version of the random initialization typically used: instead of randomly perturbing $\vx$ in the $\epsilon$ ball we perturbed it by at most $\epsilon / 10$ before running the usual scheme. Table \ref{tab:adv_robust} summarizes our results. 

For the intepretable gradients in Figure~\ref{fig:robust_grads} we used the same architecture, but switched to 2-norm constraints. We chose a random image from classes 1-4 and computed the input-output gradient with respect to the loss function. We found that similar results were achieved with $\infty$-norm projections (and hinge loss) but the uniform gradient scale made the 2-norm-constrained input-output gradients easier to visualize.

\begin{table}[ht!]
    \centering
    \resizebox{\linewidth}{!}{
    \begin{tabular}{|l|r|r|r|r|r|}
    \hline
    \multirow{2}{*}{Model} & Clean & \multicolumn{2}{|c|}{FGS} & \multicolumn{2}{|c|}{PGD} \\ \cline{2-6}
     & \makebox Err.  $\backslash \epsilon$  & $0.1$ & $0.3$ & $0.1$ & $0.3$ \\ \hline
    Standard ReLU & 1.6 & 98.3 & 100.0 & 100.0 & 100.0 \\
    Standard MaxMin & 1.5 & 98.2 & 100.0 & 100.0 & 100.0 \\
    Margin-0.1 ReLU & 6.2 & 88.3 & 100.0 & 89.7 & 100.0 \\ 
    Margin-0.1 MaxMin & 1.9 & 36.3 & 99.2 & 44.4 & 99.8 \\
    Margin-0.3 ReLU & 16.9 & 70.1 & 100.0 & 70.3 & 100.0 \\
    Margin-0.3 MaxMin & 5.3 & 20.5 & 62.2 & 24.4 & \textbf{77.7} \\ 
    PGD 0.1 & \textbf{1.02} & 8.6 & 74.4 & 17.9 & 100.0 \\
    PGD 0.15 & 1.36 & \textbf{8.1} & \textbf{52.9} & \textbf{15.1} & 99.7 \\ \hline
    
    \end{tabular}
    }
    \caption{\textbf{Adversarial robustness} The classification error for varying $L_\infty$ distance of adversarial attacks. A perturbation size of 0.1 and 0.3 was used. }
    \label{tab:adv_robust}
\end{table}%

\end{appendices}

\end{document}